\documentclass[twoside,11pt]{article}

\usepackage{blindtext}
\usepackage{lastpage}
\usepackage{amsmath}
\usepackage{amssymb}
\usepackage{mathtools}
\usepackage{amsthm}
\usepackage{bbm}
\usepackage{amsfonts} 
\usepackage{nicefrac} 
\usepackage{xcolor}  
\usepackage{natbib}
\usepackage{tikz}
\usetikzlibrary{arrows.meta,calc}
\usepackage{subcaption}
\usepackage{booktabs}
\usepackage{multirow}

\usepackage[amssymb, graphicx]{jmlr2e}

\newcommand\E{\mathbb{E}}
\newcommand\R{\mathbb{R}}

\newcommand\D{\mathcal{D}}
\newcommand\Finva{ \mathcal{F}_{\mathrm{inv}} }

\newcommand\KL{\mathrm{KL}}
\newcommand\F{\mathcal{F}}
\newcommand\Q{\mathcal{Q}}
\renewcommand\P{\mathcal{P}}
\DeclareMathOperator*{\argmin}{arg\,min}
\DeclareMathOperator*{\argmax}{arg\,max}
\newcommand\Etr{E_{\mathrm{train}}}
\newcommand\risk{\mathcal{R}}

\newcommand\N{\mathcal{N}}
\newcommand\Var{\mathrm{Var}}
\newcommand\p{p}
\newcommand\Phat{\mathcal{Q}}
\newcommand\phat{q}
\newcommand\q{q}

\definecolor{darkred}{RGB}{139,0,0}

\usepackage{lastpage}

\firstpageno{1}
\begin{document}

\title{Robust Representation Learning through Explicit Environment Modeling}

\author{\name Yuli Slavutsky \email ys3938@columbia.edu \\
       \addr Department of Statistics\\Columbia University\\
           \AND
       \name David M. Blei \email david.blei@columbia.edu \\
       \addr Departments of Statistics and Computer Science\\Columbia University
       }

\editor{}

\maketitle

\begin{abstract}
We consider learning from labeled data collected across multiple environments, where the data distribution may vary across these environments. This problem is commonly approached from a causal perspective, seeking invariant representations that retain causal factors while discarding spurious ones. However, this framework assumes that the environment has no direct effect on the target. In contrast, we consider settings in which this assumption fails, but still aim to learn representations that support robust prediction on average across previously unseen environments. 
To this end, we study representations learned by explicitly modeling variation across environments and then marginalizing that variation out. 
We analyze the resulting representations and characterize when they are preferable to those learned by causal invariant-representation methods. 
We propose a concrete method based on generalized random-intercept models, a class of predictors in which such marginalization is possible, and study their generalization properties. Empirically, we show that these models outperform invariant-learning methods across a range of challenging settings.

\end{abstract}

\begin{keywords}
  Robust representation learning, Out-of-distribution generalization, Multi-environment generalization, Mixed-effect models, Neural random-intercept models.
\end{keywords}

\section{Introduction} \label{sec:intro}

We study multi-environment generalization, also known as out-of-distribution (OOD) generalization, where labeled data is collected across multiple environments and both the distribution of observations and the conditional distribution of labels given observations may vary across environments. We aim to learn representations from training environments that remain predictive for observations drawn from new environments.

Such settings arise in many applied domains in which data is collected under heterogeneous conditions. In medical imaging, for example, environments may correspond to hospitals or scanners \citep{bernal2013statistical, visscher2003mixed, payne2015revisiting}. In genomics and biology, environments often reflect laboratories, batches, or experimental conditions \citep{zhou2012genome, listgarten2010correction, zhang2010mixed, baayen2008mixed, judd2012treating}. In economics and social science, environments may correspond to countries, regions, or time periods \citep{gelman2007analysis}. Similar structure appears in education research, where data is collected across schools, districts, or countries \citep{raudenbush1999synthesizing, nye2000effects, lyu2023estimating}.
In all of these examples, models trained on data from a fixed set of environments are often deployed in new ones.

Specifically, we study learning from labeled data $(x, y)$ collected across multiple environments $e$,
\begin{equation} \label{eq:data}
\D = \{\{(x_i, y_i)\}_{i = 1}^{n_e}\}_{e \in \Etr},
\qquad
\Etr \coloneq \{e_1, \dots, e_m\}.
\end{equation}
We assume the data is sampled according to 
\begin{equation} \label{eq:Pexy}
    \prod_{j=1}^{m} \p(e_j)\, \prod_{i=1}^{n_e} \p(x_{ij} \mid e_j) \, \p(y_{ij} \mid x_{ij}, e_j).
\end{equation}
The conditional relationship between $x$ and $y$ may vary across environments, but the environments are assumed to share an underlying structure.

Our goal is to use data from the training environments to learn predictors that generalize to unseen test environments $e \notin \Etr$, drawn from $\p(e)$.
In particular, we seek a representation function $f(x)$ that supports accurate prediction of $y$ for a new observation $x$ arising in a new environment $e$, with $(e,x)$ sampled according to $\p(e)\,\p(x \mid e)$.

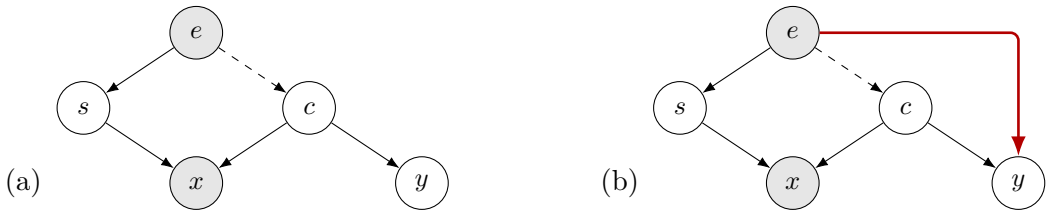
\begin{figure}[t]
  \centering
  \begin{subfigure}[t]{0.48\textwidth}
    \centering
    \phantomsubcaption\label{fig:pgm_two_panel_a}
    \begin{tikzpicture}[>=Latex]
      \tikzset{
        var/.style={circle, draw=black, thin, minimum size=7mm, inner sep=0pt, font=\small},
        grayvar/.style={var, fill=gray!20}
      }

      \node[grayvar] (e) at (0,5) {$e$};
      \node[var]     (s) at (-1.5,4) {$s$};
      \node[var]     (c) at (1.5,4) {$c$};
      \node[grayvar] (x) at (0,3) {$x$};
      \node[var]     (y) at (3,3) {$y$};

      \draw[->] (e) -- (s);
      \draw[->] (s) -- (x);
      \draw[->] (c) -- (x);
      \draw[->] (c) -- (y);
      \draw[dashed, ->] (e) -- (c);

      \node at (-2.3,3) {(\subref*{fig:pgm_two_panel_a})};
    \end{tikzpicture}
  \end{subfigure}
  \hfill
  \begin{subfigure}[t]{0.48\textwidth}
    \centering
    \phantomsubcaption\label{fig:pgm_two_panel_b}
    \begin{tikzpicture}[>=Latex]
      \tikzset{
        var/.style={circle, draw=black, thin, minimum size=7mm, inner sep=0pt, font=\small},
        grayvar/.style={var, fill=gray!20}
      }

      \node[grayvar] (e) at (0,5) {$e$};
      \node[var]     (s) at (-1.5,4) {$s$};
      \node[var]     (c) at (1.5,4) {$c$};
      \node[grayvar] (x) at (0,3) {$x$};
      \node[var]     (y) at (3,3) {$y$};

      \draw[->] (e) -- (s);
      \draw[->] (s) -- (x);
      \draw[->] (c) -- (x);
      \draw[->] (c) -- (y);
      \draw[dashed, ->] (e) -- (c);
      \draw[->, red!70!black, line width=1pt, rounded corners=3pt] (e.east) -| (y.north);

      \node at (-2.3,3) {(\subref*{fig:pgm_two_panel_b})};
    \end{tikzpicture}
  \end{subfigure}
  \caption{Probabilistic graphical model. Observed variables are colored in gray. The environment $e$ influences spurious features $s$, and may also affect causal features $c$. In (a) the label $y$ depends only on the causal features $c$, so $y \perp e \mid c$. In (b) an additional direct effect from the environment $e$ to the label $y$ is added.}
  \label{fig:pgm_two_panel}
\end{figure}

Multi-environment generalization gained considerable interest since the work of \citet{peters2016causal}. 
A prominent line of methods targets \emph{invariant representations}
\citep{rojas2018invariant,IRM,pfister2019invariant,chang2020invariant,ahuja2020invariant,mahajan2021domain,wald2021calibration,krueger2021out,lu2021invariant,lin2022bayesian}.
These methods assume that there are \emph{causal} features $c$ and \emph{spurious} features $s$, both of which may depend on the environment $e$, such that $x$ depends on both $c$ and $s$, while the label $y$ depends only on $c$; see Figure \ref{fig:pgm_two_panel_a}. 

As an example, consider predicting pneumonia diagnoses from chest X-ray images collected at multiple hospitals. Here, each environment $e$ corresponds to a hospital, the causal factors $c$ represent the underlying pathology, and the spurious factors $s$ capture hospital-specific imaging protocols. The observed input $x$ is the chest X-ray image, and $y$ is a binary indicator of pneumonia diagnosis. 

Under the model assumed by causal methods, the environment has no direct effect on the label, so any dependence of $y$ on $e$ arises only through $c$.
Accordingly, invariance-based methods seek a representation $f:\mathcal{X}\to\mathbb{R}^d$ such that $y$ is conditionally independent of $e$ given $f(x)$, that is, $y \perp e \mid f(x)$.

However, if diagnostic criteria in fact differ across hospitals, then the hospital example no longer follows the causal model assumed by invariance-seeking methods. Instead, it corresponds to the model in Figure \ref{fig:pgm_two_panel_b}. In that case, $\p(y \mid c,e) \neq \p(y \mid c)$, and no representation $f(x)$ can in general satisfy $y \perp e \mid f(x)$.

The question this paper asks is: When the environment $e$ directly affects $y$, \emph{how can we learn representations that yield good average predictive performance on observations from unseen environments?}

Formally, let 
\begin{equation} \label{eq:Ffamily}
    \F \coloneq \{f_\varphi:\mathcal{X}\to\R^d \mid  \varphi\in\Phi\}
\end{equation}
 be a family of candidate representations, and let 
 \begin{equation} \label{eq:Qfamily_0}
     \Phat \coloneq \left\{\phat_{\psi}\left(y \mid f_\varphi(x)\right) \mid \psi \in \Psi, \varphi\in\Phi \right\}
 \end{equation}
be a family of conditional distributions for $y$ given $x$, obtained by composing corresponding predictors with these representations. 
We define an \emph{environment-average risk} by 
\begin{equation} \label{eq:AvgRisk_0}
    \bar{\risk}_{\Phat}(\psi, \varphi) \coloneqq 
    \E_{\p(e)\p(x \mid e)} \left[ 
    \KL\left( \p(y \mid x, e) \, \Vert \, \phat_\psi(y \mid f_\varphi(x)) \right)
    \right].
\end{equation}
This risk measures the average discrepancy between the environment-specific Bayes predictor $\p(y \mid x,e)$ and a predictor of the form $\phat_\psi(y \mid f_\varphi(x))$, which depends only on $x$ and not on $e$.
Our goal is to find a predictor $\phat_{\psi^\star}$ and a representation $f_{\varphi^\star}$ that minimize this risk,
\begin{equation}
    (\psi^\star, \varphi^\star) \in \argmin_{\psi \in \Psi, \varphi \in \Phi} \bar{\risk}_{\Phat}(\psi, \varphi).
\end{equation}
We call such representations \emph{robust}.

As we discuss in \S \ref{sec:RI}, minimizing this objective is equivalent to maximizing the population log-likelihood over predictors of the form $\phat_\psi(y \mid f_\varphi(x))$.
The optimal predictor within the class $\Phat$ is therefore the one that approximates 
\begin{equation} \label{eq:pyx}
    \p(y \mid x)=\E_{\p(e \mid x)}[\p(y \mid x,e)].
\end{equation}
Notice that the expectation here is with respect to conditional distribution of all environments, not only the ones from the training data. 

A natural first idea is to average the environment-specific predictive distributions across the observed training environments,
$\phat_\psi(y \mid f_\varphi(x))
\approx
\sum_{j=1}^m \pi_j(x)\,\phat_\psi(y \mid f_\varphi(x),e_j)$,
where $\pi_j(x)\approx p(e_j \mid x)$.
This can reduce dependence on any single environment $e_j \in \Etr$, but it cannot recover the marginal conditional $\p(y \mid x)$ of Equation \eqref{eq:pyx}. At best, such averaging can recover $\p(y \mid x, e \in \Etr)$, contradicting our goal to generalize to new environments $e \notin \Etr$.
Recovering the full marginal conditional $\p(y \mid x)$ requires more than combining predictions from the observed environments in $\Etr$: it \emph{requires a model of the data-generating distribution} under which the environment can be marginalized out. 

In this paper, we study this idea. We first instantiate it through a generalized neural random-intercept model \citep{slavutsky2026neural}, in which the response $y$ depends on a fixed representation $f_{\varphi^\star}(x)$ that captures structure shared across environments, and on a random intercept $\gamma_e \sim \N(0,\sigma^{\star 2})$ that captures environment-specific variation. We estimate $\sigma^{\star 2}$ from the data and then marginalize over the random intercept to obtain an estimate of $p(y\mid x)$. We show that this approach outperforms invariance-seeking methods across a range of challenging tasks involving generalization to unseen environments.

More broadly, we then study predictors of the form $\phat_\psi(y \mid f_\varphi(x))$. We show that the environment-average risk of such predictors admits a decomposition into three terms: one determined by the representation $f_\rho$, one capturing the additional risk induced by the predictor $p_\psi$, and an irreducible term that depends only on the true data-generating distribution. Building on this decomposition, we show that minimizing the environment-average risk over this class of predictors yields approximately robust representations.

The rest of this paper is organized as follows. In \S \ref{sec:related} we review related work.
In \S \ref{sec:RI} we discuss generalized neural random-intercept models and how they can be used to achieve robustness through marginalization of random intercepts.
In \S \ref{sec:robust}, we study robustness through risk minimization beyond random-intercept models. In \S \ref{sec:empirical} we present empirical results. In \S \ref{sec:compare} we compare robustness through risk minimization with invariance-seeking methods. We conclude with a discussion of our results in \S \ref{sec:discuss}.

\section{Related work} \label{sec:related}

Methods for learning representations that are robust to distribution shift differ in the assumptions they make about which shifts may occur at test time.
Distributionally robust optimization (DRO) \citep{DRO, duchi2021statistics, duchi2021learning, wei2023distributionally} assumes that the test distribution may deviate from the observed training distribution within a predefined uncertainty set, and optimizes performance under the worst-case distribution in that set.
One variant is group DRO \citep{sagawa2019distributionally, piratla2021focus} which incorporates group structure that may correlate with  features that may lead to biased predictions.
When group labels are not observed, several strategies have been proposed, including reweighting high-loss examples \citep{liu2021just} and balancing class-group combinations through data subsampling \citep{idrissi2022simple}.
By contrast, our approach targets low average risk on unseen environments rather than protection against worst-case perturbations, and models environment-level heterogeneity explicitly rather than treating it as an unspecified perturbation.

Methods that explicitly consider multi-environment settings aim to learn predictors that generalize beyond the environments observed during training.
One such line of work is adversarial methods, that seek representations that do not distinguish among environments.
For example, \citet{sinha2017certifying} construct adversarial perturbations of the training distribution and optimize performance against these  perturbations. Similarly, \citet{ganin2016domain} propose domain-adversarial training, which learns features that are predictive of the label while being uninformative about the domain.
Like DRO, these approaches aim to protect against a worst-case scenario and treat the environment as a signal to be removed.
Our approach differs from these methods in the same way: it targets average performance on new environments and explicitly models environment-level variation rather than suppressing it.

Another substantial line of research approaches out-of-distribution generalization from a causal perspective.
This line of work assumes that there exists a representation for which the predictive relationship with the target is invariant across environments, and aims to retain features whose association with the target is stable, while discarding features whose predictive value is environment-specific.
This view is formalized in invariant prediction \citep{peters2016causal, peters2017elements}, and has motivated a large literature on invariant representation learning, including invariant risk minimization and its variants \citep{IRM, ahuja2020invariant, lu2021invariant, rothenhausler2021anchor, lin2022bayesian}, invariant models for transfer learning \citep{rojas2018invariant}, sequential extensions \citep{pfister2019invariant}, and related methods based on causal matching or invariant rationales \citep{mahajan2021domain, chang2020invariant}.
Closely related approaches such as risk extrapolation and calibration-based objectives also aim to improve robustness by favoring predictors whose behavior is stable across environments \citep{krueger2021out, wald2021calibration}.

Our work, however, allows invariance to fail: rather than requiring the environment to become irrelevant after conditioning on a representation, we allow it to affect the target directly and study representations that explicitly model this effect and then marginalize it out.
Our analysis in \S \ref{sec:compare} characterizes when such representations are preferable to invariant ones, and our empirical results show that this distinction matters in practice.

\section{Marginalization of environment effects} \label{sec:RI}

Our goal is to learn robust representations by minimizing the environment-average risk $\bar{\risk}(\varphi,\psi)$ in Equation \eqref{eq:AvgRisk_0}. 
This risk evaluates prediction over environments drawn from the same population of environments as the training environments.

To characterize its minimizers, we define the \emph{marginal risk}
\begin{equation} \label{eq:MargRisk}
\risk(\varphi, \psi)
\coloneqq
\E_{p(x)}
\left[
\KL\left( p(y \mid x) \Vert \phat_\psi(y \mid f_\varphi(x)) \right)
\right].
\end{equation}
The values of $\bar{\risk}(\varphi,\psi)$ and $\risk(\varphi,\psi)$ are generally different: the environment-average risk includes variation induced by unobserved environments, whereas the marginal risk does not. However, by the conditional KL decomposition\footnote{This also follows immediately from Lemma \ref{lemma:same_min} in the next section.}, this difference is independent of $(\varphi,\psi)$. Therefore, although $\bar{\risk}$ is the target risk, its minimizers can be identified by minimizing $\risk$:
\begin{equation} \label{eq:same_infima}
\argmin_{\varphi \in \Phi, \psi \in \Psi} \bar{\risk}(\varphi, \psi)
=
\argmin_{\varphi \in \Phi, \psi \in \Psi} \risk(\varphi, \psi).
\end{equation}

This indicates that the best predictor that we can recover is the true marginal conditional distribution $\p(y \mid x)$. A principled way to estimate it is to model the data-generating process through the identity
\begin{equation} \label{eq:form}
\p(y \mid x)
=
\int
\p(e \mid x)\,\p(y \mid x,e)\, de.
\end{equation}
Rather than averaging only over the observed training environments, this approach learns a model for $\p(e \mid x)$ and uses it to construct a marginalized estimator of $\p(y \mid x)$.

A flexible model class of this form is a \emph{generalized neural random-intercept model}. Below, we present this model and show how it can be used to learn robust representations.

\subsection{Neural generalized random-intercept models}

Neural generalized random-intercept models are a special case of neural generalized mixed-effects models (NGMMs) \citep{slavutsky2026neural}. Classical generalized linear mixed-effects models (GLMMs) assume that each response follows an exponential-family distribution whose natural parameter is a linear function of observed covariates and latent group-specific random effects. For this reason, GLMMs  have long been used to analyze grouped and hierarchical data (see \citet{bryk1992hierarchical} and references therein for a historical overview), but their linear predictor makes them unsuitable for learning flexible representations. NGMMs extend this model class by replacing the linear predictor with neural networks, allowing the model to capture complex relationships in the data.

Specifically, let 
\begin{equation}
\label{eq:prior}
\gamma_e \sim \N(0,\sigma^2)
\end{equation}
be environment-specific random effects, sampled independently across environments. 
A neural generalized random-intercept model assumes that the effect of the environment on the response is entirely captured by the latent variable $\gamma_e$, and that covariates are independent of $\gamma_e$.
Conditionally on $(x,\gamma_e)$, the response $y$ follows a regular one-parameter exponential family distribution
 $\q_{\beta, \varphi}(y \mid x,\gamma_e)$ satisfying 
\begin{equation}
\label{eq:RI_mean}
\E_{\q_{\beta, \varphi}}\left[y \mid x, \gamma_e\right]=
\eta^{-1}\left(\beta^\top f_\varphi(x)+\gamma_e\right),
\end{equation}
where $f_\varphi$ is a neural network with arbitrary structure.

This yields marginalized predictors of the form 
\begin{equation} \label{eq:marg_ri}
    \q_{\beta,\varphi,\sigma}(y \mid x) = \int \q_{\beta, \varphi}(y \mid x,\gamma)\,\phi_\sigma(\gamma) \, d\gamma,
\end{equation}
where $\phi_\sigma$ denotes the density of $\N(0,\sigma^2)$. 

For now, suppose that the true data-generating process is itself a random-intercept model. This leads to the following strategy for learning robust representations:
\begin{enumerate}
    \item Fit the parameters of the neural random-intercept model, $(\hat{\beta},\hat{\varphi},\hat{\sigma})$, from the data $\D$ by minimizing the empirical analogue of the marginal risk in Equation \eqref{eq:MargRisk}.
    \item For a new observation $x$ arising from an unseen environment $e$, form the prediction $\q_{\hat{\beta},\hat{\varphi},\hat{\sigma}}(y \mid x)$ with the learned parameters, as in Equation \eqref{eq:marg_ri}.
\end{enumerate}

We discuss misspecification and approximate robustness in the next section. Before that, we turn to the practical implementation of this strategy: fitting generalized neural random-intercept models to data and using them to make predictions in previously unseen environments.

\subsection{Estimation of neural generalized random-intercept models}
 \label{sec:ri_estimation}

In neural generalized random-intercept models, conditionally on $(x,\gamma_e)$, the response follows
\begin{equation}
\q_{\beta,\varphi}(y \mid x,\gamma)
=
h(y)\exp\left(
y\,\vartheta(x,\gamma;\beta,\varphi)
-
A\bigl(\vartheta(x,\gamma;\beta,\varphi)\bigr)
\right).
\end{equation}
Here $h(y)$ is the base measure, $\vartheta$ is the natural parameter of the regular one-parameter exponential family, and $A$ is the log-partition function, satisfying
\begin{equation}
\vartheta(x,\gamma;\beta,\varphi)
=
(A')^{-1}\left(\eta^{-1}\bigl(\beta^\top f_\varphi(x)+\gamma\bigr)\right).
\end{equation}
 
Given training data $\D = \{\{(x_i,y_i)\}_{i=1}^{n_e}\}_{e \in \Etr}$,
denote by
\begin{equation}
x_e \coloneqq (x_1,\dots,x_{n_e}),
\qquad
y_e \coloneqq (y_1,\dots,y_{n_e}),
\end{equation}
the covariates and responses in environment $e$, respectively.
Since minimization of the marginal risk in Equation \eqref{eq:MargRisk} is equivalent to maximization of the marginal log likelihood, 
the empirical estimates are given by
\begin{equation}
\label{eq:ri_estimator}
(\hat\beta,\hat\varphi,\hat\sigma)
\in
\argmax_{\beta,\varphi,\sigma} \mathcal{L}_{\Q}(\theta), 
\qquad 
\mathcal{L}_{\Q}(\theta) \coloneqq
\sum_{e \in \Etr}
\log \q_{\beta,\varphi,\sigma}(y_e \mid x_e),
\end{equation}
where, by conditional independence within each environment,
\begin{equation}
\label{eq:marginal_likelihood_env}
\q_{\beta,\varphi,\sigma}(y_e \mid x_e)
=
\int_{\R}
\left(
\prod_{i=1}^{n_e}
\q_{\beta,\varphi}(y_i \mid x_i,\gamma)
\right)
\phi_\sigma(\gamma)\, d\gamma.
\end{equation}

For general exponential-family responses, the integral in Equation \eqref{eq:marginal_likelihood_env} does not admit a closed form.
However, in the random-intercept case it is reduced to solving a one-dimensional ordinary differential equation.
This yields a differentiable expression of marginal log likelihood, which can be optimized by stochastic gradient ascent.
See \S 2.2 in \citep{slavutsky2026neural} for the general derivation, Appendix A for a random-intercept logistic illustration, and \S 2.3 for the corresponding stochastic optimization procedure.

\subsection{Prediction} \label{sec:ri_prediction}

Once the model is fit, it can be used to predict responses for datapoints from previously unseen environments $e' \notin \Etr$.
Since no labeled data from $e'$ is available at prediction time, the estimated posterior of its random intercept coincides with the estimated prior,
\begin{equation}
\gamma_{e'} \sim \N(0,\hat\sigma^2).
\end{equation}
The predictive distribution is then generally obtained by replacing the parameters $(\beta,\varphi,\sigma)$ in Equation \eqref{eq:marg_ri} by their estimates $(\hat\beta,\hat\varphi,\hat\sigma)$.

However, in two special cases, Gaussian responses and binary responses with a symmetric inverse link, the Bayes-optimal prediction for a new environment depends only on the fixed component $\beta^\top f_\varphi(x)$.
In these cases, prediction in a new environment can be performed  without computing the marginalization.

The Gaussian case is immediate: under identity link we have
\begin{equation}
\E_{\q_{\beta,\varphi}}[y \mid x,\gamma]
=
\beta^\top f_\varphi(x)+\gamma,
\end{equation}
so for a new environment, since $\E[\gamma]=0$, we have
\begin{equation}
\E_{\q_{\beta,\varphi,\sigma}}[y \mid x]
=
\int \left(\beta^\top f_\varphi(x)+\gamma\right)\phi_\sigma(\gamma)\, d\gamma
=
\beta^\top f_\varphi(x).
\end{equation}

Binary responses are addressed by the following proposition.

\begin{proposition}[Prediction in new environments, Proposition 2 in \citeauthor{slavutsky2026neural}] \label{prop:binary_new_envs}
Let $y \in \{0,1\}$, and suppose the inverse link $\eta^{-1}$ is strictly increasing and satisfies $1-\eta^{-1}(u)=\eta^{-1}(-u)$
for all $u \in \R$.
Then
\begin{equation}
\hat y(x)
\coloneqq
\arg\max_{c\in\{0,1\}}
\q_{\beta,\varphi,\sigma}(y=c \mid x)
\end{equation}
depends only on the fixed component $\beta^\top f_\varphi(x)$.
In particular, $\hat y(x)=1
\Longleftrightarrow
\beta^\top f_\varphi(x)\geq 0$.
\end{proposition}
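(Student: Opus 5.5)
Write $u \coloneqq \beta^\top f_\varphi(x)$ and $g(u) \coloneqq \q_{\beta,\varphi,\sigma}(y=1 \mid x)$. Since $y$ is Bernoulli given $(x,\gamma)$ with mean $\eta^{-1}(u+\gamma)$, Equation \eqref{eq:marg_ri} gives
\begin{equation*}
g(u) = \int_{\R} \eta^{-1}(u+\gamma)\,\phi_\sigma(\gamma)\,d\gamma ,
\qquad
\q_{\beta,\varphi,\sigma}(y=0 \mid x) = 1-g(u).
\end{equation*}
This shows immediately that the marginal predictive probabilities depend on $x$ only through the scalar $u$, since $\sigma$ is a fixed parameter. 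Hence $\hat y(x)$ is a function of $u$ alone, which is the first claim.

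For the characterization, note that $\hat y(x)=1$ iff $g(u)\ge 1/2$, taking the convention that ties are broken toward $1$, consistent with the $\geq$ in the statement. The plan is to show two things: $g$ is strictly increasing, and $g(0)=1/2$.

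\textbf{Monotonicity.} If $u<u'$, then $\eta^{-1}(u+\gamma)<\eta^{-1}(u'+\gamma)$ for every $\gamma$, because $\eta^{-1}$ is strictly increasing. Integrating this pointwise strict inequality against the strictly positive density $\phi_\sigma$ gives $g(u)<g(u')$. The integrals are finite because $\eta^{-1}$ takes values in $[0,1]$. (If $\sigma=0$ the mixture degenerates to $g=\eta^{-1}$, and the claim is direct.)

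\textbf{Symmetry at zero.} Because $\phi_\sigma$ is symmetric, the substitution $\gamma\mapsto-\gamma$ together with the assumption $1-\eta^{-1}(v)=\eta^{-1}(-v)$ gives
\begin{equation*}
1-g(u)=\int \eta^{-1}(-u-\gamma)\,\phi_\sigma(\gamma)\,d\gamma=\int \eta^{-1}(-u+\gamma)\,\phi_\sigma(\gamma)\,d\gamma=g(-u).
\end{equation*}
Setting $u=0$ yields $g(0)=1/2$.

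Combining the two facts, $g(u)\ge 1/2 = g(0)$ holds iff $u\ge 0$. Therefore $\hat y(x)=1 \iff \beta^\top f_\varphi(x)\ge 0$.

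The only delicate points are the strictness of the monotonicity, which needs $\phi_\sigma>0$ and the pointwise strict inequality, and the tie convention at $u=0$. Neither is a real obstacle.
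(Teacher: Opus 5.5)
Your proof is correct and takes the natural route: you write the marginal success probability as the Gaussian smoothing $g(u)=\int_{\R}\eta^{-1}(u+\gamma)\,\phi_\sigma(\gamma)\,d\gamma$, show it is strictly increasing, and combine the symmetry of $\phi_\sigma$ with $1-\eta^{-1}(v)=\eta^{-1}(-v)$ to get $g(-u)=1-g(u)$, hence $g(0)=1/2$. The paper does not reproduce a proof here (it defers to Proposition 2 of the cited NGMM work), and your argument is the essentially standard one for this statement, including the tie-breaking at $u=0$ implicit in the $\geq$.
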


The proposition above shows that in a binary model, deployment in a new environment does not require marginalization.

While these properties make the generalized neural random-intercept model a convenient choice, the benefits of learning predictors by marginalizing over environment effects extend beyond this model. We discuss these benefits in the next section.

\section {Robustness through risk minimization} \label{sec:robust}

We now extend our discussion more broadly to models of the form $\phat_\psi(y \mid f_\varphi(x))$.
Recall the family of candidate representations $\Phat$ in Equation \eqref{eq:Qfamily_0} and the environment-average risk in Equation \eqref{eq:AvgRisk_0}. For convenience, here we collect the representation parameters $\varphi\in \Phi$ and the predictor parameters $\psi \in \Psi$ into a single parameter $\theta \in \Theta$ where $\theta=(\psi,\varphi)$ and $\Theta \coloneq \Psi \times \Phi$.

We begin our analysis by showing that the environment-average risk admits the following decomposition.

\begin{lemma} \label{lemma:same_min}
For every $\theta \in \Theta$,
\begin{equation}
\bar{\risk}_{\Phat}(\theta)
=
\risk_{\Phat}(\theta)
+
\Delta(\bar{\risk}, \risk), \qquad
\Delta(\bar{\risk}, \risk) \coloneqq
\E_{\p(e)\p(x \mid e)}\left[
\KL\left(
\p(y \mid x,e) \, \Vert \, 
\p(y \mid x)
\right)
\right].
\end{equation}
\end{lemma}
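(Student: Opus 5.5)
The plan is to prove the identity pointwise in $(e,x)$ by splitting the log-ratio inside the KL, and then average. For fixed $(e,x)$ with $\p(y\mid x,e)\ll \phat_\theta(y\mid f_\varphi(x))$, I would insert $\p(y\mid x)$ in numerator and denominator and write
\begin{equation*}
\log\frac{\p(y\mid x,e)}{\phat_\theta(y\mid f_\varphi(x))}
=
\log\frac{\p(y\mid x,e)}{\p(y\mid x)}
+
\log\frac{\p(y\mid x)}{\phat_\theta(y\mid f_\varphi(x))}.
\end{equation*}
Taking the expectation over $\p(y\mid x,e)$, the first term is exactly $\KL(\p(y\mid x,e)\,\Vert\,\p(y\mid x))$. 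The second term is a cross term of the form $\E_{\p(y\mid x,e)}[\log \p(y\mid x)-\log\phat_\theta(y\mid f_\varphi(x))]$.

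Next I would take the outer expectation over $\p(e)\p(x\mid e)=\p(x)\p(e\mid x)$. The first term integrates to $\Delta(\bar\risk,\risk)$ by definition. For the cross term, the integrand $g(x,y)\coloneqq\log\p(y\mid x)-\log\phat_\theta(y\mid f_\varphi(x))$ does not depend on $e$, so I can use the tower property:
\begin{equation*}
\E_{\p(x)}\E_{\p(e\mid x)}\E_{\p(y\mid x,e)}[g(x,y)]=\E_{\p(x)}\E_{\p(y\mid x)}[g(x,y)],
\end{equation*}
since Equation \eqref{eq:pyx} gives $\int \p(e\mid x)\p(y\mid x,e)\,de=\p(y\mid x)$. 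The right-hand side is exactly $\E_{\p(x)}[\KL(\p(y\mid x)\,\Vert\,\phat_\theta(y\mid f_\varphi(x)))]=\risk_{\Phat}(\theta)$, and summing the two terms gives the claim.

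The main obstacle is measure-theoretic: justifying the split of the expectation into two pieces and the use of Fubini/tower. I would handle it as follows.
\begin{itemize}
\item \textbf{Support.} If $\p(y\mid x)$ assigns positive mass to some $y$, then $\p(y\mid x,e)>0$ for $e$ of positive $\p(e\mid x)$-measure. Hence $\p(y\mid x,e)\ll\p(y\mid x)$ a.s., the first KL is always well defined, and the decomposition of the log-ratio is valid a.s.
\item \textbf{Integrability.} Both pieces have nonnegative expectations after the $y$-integration. Interchanging the order of integration is straightforward when the cross term is integrable, so I would state the identity with the convention that both sides are in $[0,\infty]$.
\item \textbf{Infinite values.} If $\risk_{\Phat}(\theta)=\infty$, I would show $\bar\risk_{\Phat}(\theta)=\infty$ as well. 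One route is joint convexity of KL: $\E_{\p(e\mid x)}\KL(\p(y\mid x,e)\,\Vert\,q)\ge \KL(\p(y\mid x)\,\Vert\,q)$. The other is to treat the cross-term integrand via its positive and negative parts separately.
\end{itemize}
I expect this bookkeeping, not the algebra, to be the only delicate point.
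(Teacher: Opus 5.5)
Your proposal is correct and follows the paper's proof. Both add and subtract $\log \p(y \mid x)$, identify the first piece as $\Delta(\bar{\risk},\risk)$, and collapse the $e$-integral in the cross term via $\int \p(e \mid x)\,\p(y \mid x,e)\,de = \p(y \mid x)$ to obtain $\risk_{\Phat}(\theta)$. Your measure-theoretic bookkeeping goes beyond the paper, which tacitly assumes everything is finite, but two details need tightening:
\begin{itemize}
\item Absolute continuity follows from the implication ``$\p(A \mid x)=0 \Rightarrow \p(A \mid x,e)=0$ for $\p(e \mid x)$-a.e.\ $e$'', not from the converse direction you state.
\item The cross term is not nonnegative for fixed $e$. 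Its sign is only guaranteed after averaging over $\p(e \mid x)$, so the extended-value case is cleanest via your positive/negative-part route, noting that both pieces have integrable negative parts.
\end{itemize}
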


See Appendix \ref{sup:proof_same_min} for proof. 
Here, the first term is precisely the marginal risk of Equation \eqref{eq:MargRisk}, which measures the discrepancy between the model $\phat_\theta(y \mid x)$ and the true marginal conditional distribution $p(y \mid x)$. The second term $\Delta(\bar{\risk}, \risk)$ is irreducible: it does not depend on the predictor $\phat_\theta(y \mid x)$ and therefore does not depend on the learned representation. It measures the average residual variation in the conditional distribution of $y$ given $x$ across environments, which persists because prediction in a new environment is based only on $x$, while $\p(y \mid x,e)$ varies with $e$.

Whenever $\phat_\theta(y \mid x)$ is correctly specified, that is, there exists $\theta \in \Theta$ such that
$\phat_{\theta^\star}(y \mid x)=\p(y \mid x)$,
we have $\risk_{\Phat}(\theta^\star)=0$ and $\bar{\risk}_{\Phat}(\theta^\star)=\Delta(\bar{\risk},\risk)$. 
Hence, under correct specification, the minimum achievable environment-average risk is exactly the irreducible term. 
In particular, for $\theta^\star=(\psi^\star,\varphi^\star)$  that satisfies $\theta^\star \in \argmin_{\theta \in \Theta} \risk_{\Phat}(\theta^\star)$, the representation $f_{\varphi^\star}$ is \emph{robust}.

Moreover, as the following corollary shows, for any predictor in $\Phat$, the excess environment-average risk relative to the true data-generating process is exactly its marginal risk.

\begin{corollary}[Excess environment-average risk] \label{cor:excess_env_risk}
Let $\bar{\risk}^\star_{\P}$ and $\risk^\star_{\P}$ denote the environment-average and marginal risks of the true data-generating process $\p$. Then, for any predictor $\phat_\theta \in \Phat$,
\begin{equation}
\bar{\risk}_{\Phat}(\theta)-\bar{\risk}^\star_{\P}
=
\risk_{\Phat}(\theta).
\end{equation}
\end{corollary}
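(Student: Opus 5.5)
The plan is to apply Lemma \ref{lemma:same_min} twice: once to the given predictor $\phat_\theta$, and once to the true data-generating process $\p$ viewed as a predictor. First I will pin down what $\bar{\risk}^\star_{\P}$ and $\risk^\star_{\P}$ mean. They are the two risks in Equations \eqref{eq:AvgRisk_0} and \eqref{eq:MargRisk}, with the model conditional $\phat_\theta(y \mid f_\varphi(x))$ replaced by the best environment-free predictor the true process offers, namely the marginal conditional $\p(y \mid x)$ of Equation \eqref{eq:pyx}. Under this reading:
\begin{equation*}
\risk^\star_{\P}=\E_{\p(x)}\left[\KL\left(\p(y\mid x)\,\Vert\,\p(y\mid x)\right)\right]=0,
\qquad
\bar{\risk}^\star_{\P}=\E_{\p(e)\p(x\mid e)}\left[\KL\left(\p(y\mid x,e)\,\Vert\,\p(y\mid x)\right)\right].
\end{equation*}
The second expression is exactly $\Delta(\bar{\risk},\risk)$.

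It helps to see this through the lemma as well. The proof of Lemma \ref{lemma:same_min} uses only the structure of the KL divergence and the fact that $\p(y\mid x)$ is the mixture $\E_{\p(e\mid x)}[\p(y\mid x,e)]$. It never uses the form $\phat_\psi(y\mid f_\varphi(x))$ of the model. So the decomposition holds for any conditional $\q(y\mid x)$ that does not depend on $e$. Applying it with $\q=\p(\cdot\mid x)$ gives
\begin{equation*}
\bar{\risk}^\star_{\P}=\risk^\star_{\P}+\Delta(\bar{\risk},\risk)=\Delta(\bar{\risk},\risk).
\end{equation*}

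Next, I apply Lemma \ref{lemma:same_min} directly to $\phat_\theta$, which gives $\bar{\risk}_{\Phat}(\theta)=\risk_{\Phat}(\theta)+\Delta(\bar{\risk},\risk)$. Subtracting the two identities, the irreducible term $\Delta(\bar{\risk},\risk)$ cancels, because it depends only on $\p$ and is the same in both. What remains is $\bar{\risk}_{\Phat}(\theta)-\bar{\risk}^\star_{\P}=\risk_{\Phat}(\theta)$, as claimed.

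The main obstacle is interpretive rather than technical: I must justify that the ``environment-average risk of the true process'' uses $\p(y\mid x)$ as the comparison predictor. It should not use the environment-specific $\p(y\mid x,e)$, which would make $\bar{\risk}^\star_{\P}$ zero and the statement false whenever $\Delta>0$. I will argue this from the setup. Prediction in a new environment may depend only on $x$, and by Equation \eqref{eq:pyx} the optimal such predictor is $\p(y\mid x)$. A minor technical point is that subtracting the two identities requires $\Delta(\bar{\risk},\risk)<\infty$; I will assume this, as is implicit in the lemma. If $\Delta=\infty$, both sides of the decomposition are infinite and the difference is undefined.
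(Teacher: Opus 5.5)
Your proposal is correct and follows essentially the same route as the paper. You apply Lemma~\ref{lemma:same_min} both to $\phat_\theta$ and to the true conditional $\p(y \mid x)$, use $\risk^\star_{\P}=0$, and cancel the common term $\Delta(\bar{\risk},\risk)$. Your two extra remarks make explicit what the paper's two-line proof leaves implicit: that the lemma's proof holds for any $e$-free conditional, so it legitimately applies to $\p(y\mid x)$ even if that is not in $\Phat$; and that the subtraction requires $\Delta(\bar{\risk},\risk)<\infty$.
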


\begin{proof}
By Lemma \ref{lemma:same_min}, the environment-average risk and the marginal risk differ by the same additive term for every predictor. In particular, for any $\theta \in \Theta$ we have $\bar{\risk}_{\Phat}(\theta)-\risk_{\Phat}(\theta) = \bar{\risk}^\star_{\P}-\risk^\star_{\P}$. 
Since,  by definition, $\risk^\star_{\P}=0$,  it follows that
$\bar{\risk}_{\Phat}(\theta)-\risk_{\Phat}(\theta)
= \bar{\risk}^\star_{\P}$.
\end{proof}

However, since the true data-generating mechanism is unknown, a model  $\phat_\theta(y \mid x)$ will be in general misspecified. In such case exact robustness may not be attainable; see Figure \ref{fig:RI} for an illustration in the generalized neural random-intercept setting.

\begin{figure} [ht]
    \centering
    \includegraphics[width=0.8\linewidth]{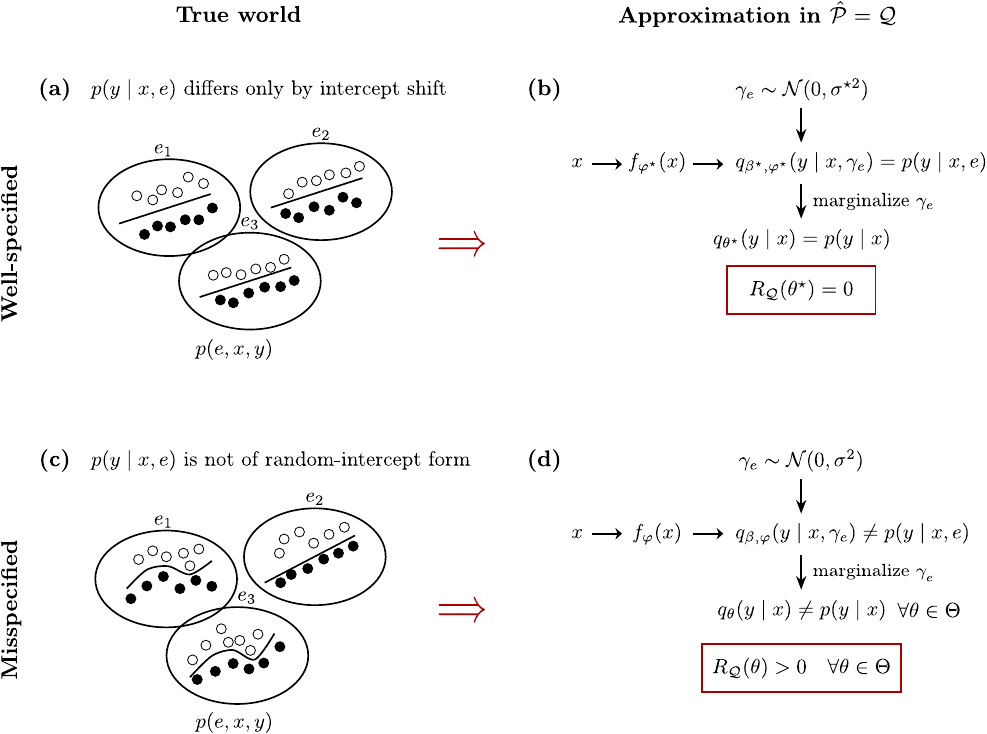}
    \caption{Schematic comparison between the true data-generating process and approximation within the random-intercept family.
Left: the true joint distribution $\p(e,x,y)$ induces environment-specific conditionals $\p(y \mid x,e)$, shown here as labeled data across environments.
Top: the conditionals differ only by intercept shift.
Bottom: the environment effect is not of random-intercept form.
Right: in both cases we approximate using predictors in $\Q$, obtained by mapping $x$ to $f(x)$, introducing a latent intercept $\gamma_e$, and marginalizing it out.
When the model is correctly specified, the induced marginal predictor recovers $\p(y \mid x)$; under misspecification, it generally does not.}
    \label{fig:RI}
\end{figure}

To address this, we relax our goal and aim to find a representation $f_\varphi \in \F$ for which there exists a predictor $\phat_\psi$ whose environment-average risk is within $\varepsilon$ of the infimum over $\Theta = \Psi \times \Phi$. The following definition makes this precise.

\begin{definition}[$\epsilon$-robust representation] \label{def:robust}
Let $\Phat$ be a family of conditional distributions as in Equation \eqref{eq:Qfamily_0}, indexed by the representation parameters $\varphi$  and the remaining model parameters $\psi$. 
For $\varepsilon \geq 0$, a representation $f_{\varphi^\star}$ is called $\varepsilon$-robust if there exists $\psi^\star \in \Psi$ such that  $(\psi^\star,\varphi^\star)$ satisfies
\begin{equation}
    \bar{\risk}_{\Phat}\big(\psi^\star,\varphi^\star\big) \leq \bar{\risk}_{\Phat}\big(\psi,\varphi\big) + \varepsilon
    \qquad
    \forall \varphi \in \Phi, \psi \in \Psi .
\end{equation}
We denote the subset of $\varepsilon$-robust representations in $\F$ by $\F_\varepsilon$.
\end{definition}

This definition extends the previous one. In particular, if the minimum of $\bar{\risk}_{\Phat}$ is attained, then a \emph{robust} representation is recovered at $\varepsilon=0$.

Under misspecification, typically $\risk_{\Phat}(\theta^\star)>0$. 
The next proposition provides a decomposition of the marginal risk $\risk_{\Phat}$ itself.

\begin{proposition}[Decomposition of the marginal risk] \label{prop:decomposition}
Let $\Phat$ be a family of predictors as in Equation \eqref{eq:Qfamily_0}
parameterized by $\theta \in \Theta \coloneq \Psi \times \Phi$, and let $f_\varphi:\mathcal X \to \mathbb R^d$ denote the corresponding representation for $\varphi \in \Phi$. Then, for any $\theta \in \Theta$,
\begin{equation} \label{eq:risk_decomp}
\risk_{\Phat}(\theta)
=
I\left(y;x \mid f_\varphi(x)\right)
+
\Delta_{\Phat \mid f},
\end{equation}
where
\begin{equation}
I\left(y;x \mid f_\varphi(x)\right)
=
\E_{\p(x)}
\left[
\KL\left(
\p(y \mid x)
\, \Vert \,
\p(y \mid f_\varphi(x))
\right)
\right]
\end{equation}
is the risk induced by the representation $f_\varphi$, and
\begin{align}
% \!\!\! 
\Delta_{\Phat \mid f}
 \coloneqq
\E_{\p(x)}
\left[
\KL\left(
\p(y \mid f_\varphi(x))
\, \Vert \,
\phat_{\theta}(y \mid x)
\right)
\right]
\end{align}
is the additional risk induced by predictor, given the representation.
\end{proposition}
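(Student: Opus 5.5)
The decomposition is a chain-rule identity for KL divergence, obtained by inserting the intermediate distribution $\p(y \mid f_\varphi(x))$ between $\p(y\mid x)$ and $\phat_\theta(y\mid x)$. Throughout, $z \coloneqq f_\varphi(x)$ is a deterministic function of $x$. The conditional $\p(y \mid z)$ is defined under the true joint law $\p(x,y)=\int \p(e)\p(x\mid e)\p(y\mid x,e)\,de$, so that $\p(y \mid z) = \E_{\p(x \mid z)}[\p(y \mid x)]$. The predictor $\phat_\theta(y\mid x)=\phat_\psi(y\mid f_\varphi(x))$ depends on $x$ only through $z$.

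\textbf{Step 1.} Starting from Equation \eqref{eq:MargRisk}, I will write the integrand log-ratio as a telescoping sum:
\begin{equation*}
\log\frac{\p(y\mid x)}{\phat_\theta(y\mid x)} = \log\frac{\p(y\mid x)}{\p(y\mid z)} + \log\frac{\p(y\mid z)}{\phat_\psi(y\mid z)}.
\end{equation*}
Taking expectations under $\p(x)\p(y\mid x)$, the first part is $\E_{\p(x)}[\KL(\p(y\mid x)\Vert \p(y\mid z))]$. This equals $I(y;x\mid z)$, because $I(y;x\mid z)=\E_{\p(x,z)}[\KL(\p(y\mid x,z)\Vert\p(y\mid z))]$ and $\p(y\mid x,z)=\p(y\mid x)$ since $z$ is a function of $x$.

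\textbf{Step 2.} This is the main step: showing that the second expectation equals $\Delta_{\Phat\mid f}$, in which the outer expectation is taken under $\p(y\mid z)$ rather than $\p(y\mid x)$. The integrand $\log[\p(y\mid z)/\phat_\psi(y\mid z)]$ depends on $(x,y)$ only through $(z,y)$. I will use the tower property, first conditioning on $z$ and averaging $\p(y\mid x)$ over $\p(x\mid z)$, which gives $\p(y\mid z)$. This yields
\begin{equation*}
\E_{\p(x)}\E_{\p(y\mid x)}\Bigl[\log\tfrac{\p(y\mid z)}{\phat_\psi(y\mid z)}\Bigr]=\E_{\p(z)}\bigl[\KL(\p(y\mid z)\Vert\phat_\psi(y\mid z))\bigr].
\end{equation*}
Since the integrand depends on $x$ only through $z$, this can be rewritten as an expectation over $\p(x)$, which is exactly $\Delta_{\Phat\mid f}$. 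The point needing care is that the cross term vanishes only because $\phat_\theta$ is measurable with respect to $z$. If the predictor depended on $x$ beyond $f_\varphi(x)$, the identity would fail.

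\textbf{Step 3.} Adding the two pieces gives Equation \eqref{eq:risk_decomp}. For rigor, I will note the integrability assumptions needed to split the expectation of the sum. It suffices that both KL terms are finite; if either is infinite, both sides are $+\infty$, since each term is nonnegative. For continuous $z$, the conditionals should be understood as regular conditional distributions. I expect no further obstacles beyond this measure-theoretic bookkeeping.
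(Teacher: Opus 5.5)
Your proposal is correct and follows essentially the same route as the paper's proof: you telescope through $\p(y\mid f_\varphi(x))$, identify the first term with $I(y;x\mid f_\varphi(x))$ via $\p(y\mid x,f_\varphi(x))=\p(y\mid x)$, and use the tower property over $Z=f_\varphi(X)$ to turn the second term into $\E_{\p(z)}[\KL(\p(y\mid z)\,\Vert\,\phat_\psi(y\mid z))]$. One small refinement to your Step 3: the pointwise log-ratios can be negative, so what actually justifies splitting the expectation (including when a term is infinite) is that the negative part of each log-ratio has conditional expectation at most $1$, not merely that the two KL terms are nonnegative.
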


The proof is provided in Appendix \ref{sup:proof_decomposition}. 
Proposition \ref{prop:decomposition} shows that the marginal risk separates into two components: a representation term, determined by how much predictive information is lost when $x$ is compressed to $f_\varphi(x)$, and an approximation term, determined by the restriction of the predictor class conditional on that representation.

The first term, $I\left(y;x \mid f_\varphi(x)\right)$,
 measures the predictive information in $x$ about $y$ that is not retained by the representation $f_\varphi(x)$. Since both terms on the right-hand side of Equation \eqref{eq:risk_decomp} are nonnegative, Proposition \ref{prop:decomposition} immediately implies that
\begin{equation}
I\left(y;x \mid f_\varphi(x)\right)
\leq
\risk_{\Phat}(\theta).
\end{equation}
Therefore, if $\theta=(\psi,\varphi)$ induces small marginal risk, then the corresponding representation $f_\varphi$ cannot discard much of the information in $x$ that is relevant for predicting $y$.

The second term, $\Delta_{\Phat \mid f}$, measures the additional risk induced by restricting the predictor, conditional on the representation $f_\varphi(x)$, to the model class $\Phat$. In other words, it quantifies the discrepancy between the best predictor based only on the representation, namely $\p(y \mid f_\varphi(x))$, and the best predictor within the model class $\Phat$.

In particular, in generalized neural random-intercept models,
$\Delta_{\Q \mid f}$ admits a direct interpretation in terms of environment-level heterogeneity: if a predictor in this family achieves small risk with a small $\sigma$, this suggests that little residual environment-level variation remains after accounting for the fixed component $\beta^\top f_\varphi(x)$. In contrast, if small risk is attained only with a larger fitted $\sigma$, then, within the random-intercept approximation, more of the remaining variation is attributed to an environment-level shift. 

Combined with Corollary \ref{cor:excess_env_risk}, this shows that, under mild misspecification, a predictor $\p_{\theta^\star} \in \Phat$ with small marginal risk yields a representation that remains informative beyond the family itself. Specifically, for $\varepsilon=\risk_{\Phat}(\theta^\star)$ the corresponding representation $f_{\varphi^\star}$ is $\varepsilon$-robust.

\paragraph{Empirical environment-average risk}

So far, we analyzed the achievable risks and the value of $\varepsilon$ for which a representation is $\varepsilon$-robust, at the population level. In practice, however, the distributions $\p(e)$, $\p(x \mid e)$, and $\p(y \mid x,e)$ are not observed directly. Instead, we observe a finite dataset $\D$ of the form given in Equation \eqref{eq:data}. This introduces an additional source of error. 

Given access only to a finite dataset $\D$, we minimize an empirical objective in place of the population marginal risk. Let $\hat{\theta}$
denote the resulting estimator, and suppose that $\theta^\star$ satisfies $\bar{\risk}_{\Phat}(\theta^\star) \leq \bar{\risk}_{\Phat}(\theta) + \varepsilon_0$ for all $\theta \in \Theta$.
Then Lemma \ref{lemma:same_min} allows us to rewrite the environment-average risk as
\begin{equation}
\bar{\risk}_{\Phat}(\hat{\theta})
\quad = \quad
\underbrace{
\Delta(\bar{\risk},\risk)
}_{\text{irreducible error}}
 +
\underbrace{
(\risk_{\Phat}(\hat{\theta}) - \risk_{\Phat}(\theta^\star))
}_{\text{finite-sample estimation error}}
+
\underbrace{
\risk_{\Phat}(\theta^\star).
}_{\text{$\Phat$ specification error}}
\end{equation}

As a result, in practice, the best robustness achievable in $\Phat$ is with  $\varepsilon = \risk_{\Phat}(\hat{\theta})$, which is composed of the finite-sample estimation error $(\risk_{\Phat}(\hat{\theta}) - \risk_{\Phat}(\theta^\star))$, and the specification error $\risk_{\Phat}(\theta^\star)$ of the chosen family $\Phat$. It immediately follows that $\varepsilon \geq \risk_{\Phat}(\theta^\star) - \varepsilon_0$, providing a lower bound on the achievable range of $\varepsilon$, allowing to assess the fit.

We next study the risk $\bar{\risk}_{\Phat}(\hat{\theta})$ empirically. In \S \ref{sec:simulation}, we examine both well-specified and misspecified settings, comparing $\bar{\risk}_{\Phat}(\hat{\theta})$ with $\risk_{\Phat}(\theta^\star)$ in each case. This comparison allows us to assess both the predictive performance of the learned representations and the specification error of the proposed model.

\section{Empirical studies} \label{sec:empirical}

In all experiments, we consider a multi-environment out-of-distribution setting: training data consists of labeled examples collected from several observed environments, and evaluation is performed on held-out environments not used during training. The goal is to learn robust predictors that generalize well to new environments. We consider both synthetic and real-data settings, and include binary as well as Gaussian responses.

Across experiments, we compare the NGMM random-intercept model to empirical risk minimization (ERM), which ignores environment structure, and to invariance-seeking methods: invariant risk minimization (IRM) \citep{IRM}, variance risk extrapolation (VaREx) \citep{krueger2021out}, and, in binary experiments, calibration loss over environments (CLoVE) \citep{wald2021calibration}, which is designed for classification.

In every experiment, all methods use the same underlying predictor architecture. Hyperparameters for IRM, VaREx, and CLoVE are selected separately for each experiment by grid search.  
Additional experimental details are provided in Appendix \ref{sup:details}.

\subsection{Tradeoff simulation} \label{sec:simulation}

We begin by evaluating the environment-average risk in a controlled Gaussian simulation. In this experiment, the response depends on both a causal latent variable and the environment directly.

We simulate environments
\begin{equation}
e_j \sim \N(0,\sigma^2_e),
\end{equation}
and, for each observation $i$ in environment $j$, define the causal latent variable by
\begin{equation}
c_{ij} \mid e_j \sim \N(e_j,\sigma_c^2).
\end{equation}
We then generate latent noise variables 
\begin{equation}
    u_{ij} \sim \N(0, \sigma^2_u), \qquad \epsilon_{ij} \sim \N(0, \sigma^2_\epsilon)
\end{equation}
and set 
\begin{align}
    s_{ij} & = \alpha e_j + u_{ij}, \\ 
    x_{ij} & = (c_{ij}, s_{ij}).
\end{align}
Finally, we consider a well-specified setting, and a misspecified one:
\begin{align}
    y_{ij} & = c_{ij} + \alpha e_j + \epsilon_{ij}, \label{eq:spec} \quad \alpha \geq 0 \\ 
    y_{ij} & = c_{ij} + \alpha e_j  + (1-\alpha) e_j c_{ij} + \epsilon_{ij}, \quad \alpha\in [0,1]. \label{eq:misspec}
\end{align}

\paragraph{Well specified setting}
In the model in Equation \eqref{eq:spec}, the spurious variable $s_{ij}$ is a noisy version of $\alpha e_j$ and therefore for $\alpha \neq 0$ it carries additional information regarding $y_{ij}$ when $c_{ij}$ is known.
This yields a model where
\begin{equation}
    y_{ij} \mid c_{ij}, e_j \sim \N(c_{ij} + \alpha e_j, \sigma^2_\epsilon)
\end{equation}
so $y \perp e \mid c$ only if $\alpha=0$. 
Similarly, 
\begin{equation}
    y_{ij} \mid s_{ij}, e_j \sim \N ((1 + \alpha)e_j, \, \sigma^2_c + \sigma^2_\epsilon)
\end{equation}
which depends on $e_j$ for any $\alpha \geq 0$.
The optimal risk admits the following closed form
\begin{equation}
\bar{\risk}^\star
=
\frac{1}{2}
\log\left(
1+\frac{\alpha^2 V_\alpha}{\sigma_{\epsilon}^2}
\right),
\qquad
V_\alpha
=
\left(
\frac{1}{\sigma_e^2}
+
\frac{1}{\sigma_c^2}
+
\frac{\alpha^2}{\sigma_u^2}
\right)^{-1}.
\end{equation}
For a complete derivation see Appendix \ref{sec:spec_deriv}.

\paragraph{Misspecified setting}
In the model in Equation \eqref{eq:misspec}, we have
\begin{equation}
y_{ij}
=
c_{ij}
+
\alpha e_j
+
(1-\alpha)e_j c_{ij}
+
\epsilon_{ij}
=
c_{ij}
+
\bigl(\alpha + (1-\alpha)c_{ij}\bigr)e_j
+
\epsilon_{ij}.
\end{equation}
The posterior distribution of $e_j$ is
\begin{equation}
e_j \mid c_{ij}, s_{ij}
\sim
\mathcal N\bigl(m(c_{ij}, s_{ij}), V_\alpha\bigr),
\end{equation}
where
\begin{equation}
m(c,s)
=
V_\alpha
\left(
\frac{c}{\sigma_c^2}
+
\frac{\alpha s}{\sigma_u^2}
\right),
\qquad
V_\alpha
=
\left(
\frac{1}{\sigma_e^2}
+
\frac{1}{\sigma_c^2}
+
\frac{\alpha^2}{\sigma_u^2}
\right)^{-1}.
\end{equation}
Consequently,
\begin{equation}
y_{ij} \mid c_{ij}, s_{ij}
\sim
\mathcal N\Bigl(
c_{ij} + \bigl(\alpha + (1-\alpha)c_{ij}\bigr) m(c_{ij}, s_{ij}),
\sigma_\epsilon^2 + \bigl(\alpha + (1-\alpha)c_{ij}\bigr)^2 V_\alpha
\Bigr),
\end{equation}
and the Bayes-optimal environment-average risk is
\begin{equation}
\bar{\risk}^\star
=
\frac{1}{2}
\E_{\p(c)}
\left[
\log\left(
1+\frac{\bigl(\alpha + (1-\alpha)c\bigr)^2 V_\alpha}{\sigma_{\epsilon}^2}
\right)
\right].
\end{equation}
For a complete derivation see Appendix \ref{sec:misspec_deriv}.
Additional details for both simulations are provided in Appendix \ref{sup:tradeoff_sim}.

\paragraph{Results}
The results in Figure \ref{fig:sim} show that NGMM consistently performs best among the learned methods in both regimes. In the well-specified setting, where the environment effect is exactly of random-intercept form, NGMM nearly coincides with the Bayes-optimal risk across the full range of $\alpha$, while ERM, VaREx, and especially IRM incur substantially larger environment-average risk as $\alpha$ increases. 

In the misspecified setting, the random-intercept model is no longer exact, since the environment effect depends on both $e$ and $c$. Even so, NGMM yields lower environment-average risk than the invariance-based and ERM baselines for all values of $\alpha$. The NGMM risk decreases as $\alpha$ grows, approaching the optimal curve for $\alpha>0.5$. At $\alpha=1$ the misspecified model reduces back to the additive random-intercept case. For all values of $\alpha$, ERM, VaREx, and IRM remain far from optimal risk.

\begin{figure} [ht]
    \centering
    \includegraphics[width=1.0\linewidth]{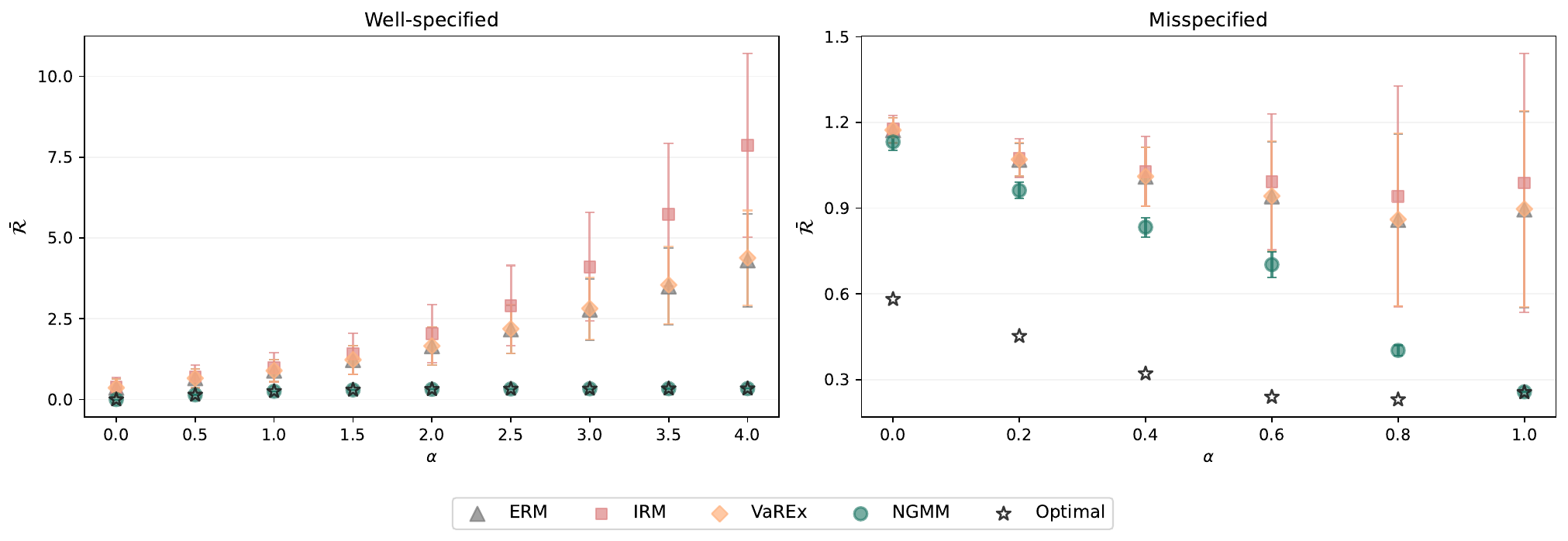}
    \caption{Environment-average risk in the tradeoff simulation for the well-specified and misspecified settings. In the well-specified case, NGMM nearly overlaps with the Bayes-optimal predictor across all values of $\alpha$. In the misspecified case environment effect is no longer of random-intercept form. NGMM still achieves the lowest risk among the learned methods and moves closer to the optimum as $\alpha$ increases and the model approaches an additive-intercept setting.}
    \label{fig:sim}
\end{figure}

\subsection{Colored MNIST}
Following the experiment in \citep{IRM}, we evaluate all methods on a controlled variant of Colored MNIST. Each observation consists of a digit image, a binary color attribute, and a binary label. The construction proceeds in three steps.
We draw digits uniformly from the MNIST dataset and define a binary parity variable $d\in\{0,1\}$, where $d=0$ denotes even digits, and $d=1$ odd digits. Each digit is assigned a binary color variable $c\in\{0,1\}$ according to a fixed parity-dependent distribution, 
\begin{equation}
    \p(c=1 \mid d=0)= \frac{5}{8}, \quad \p(c=1 \mid d=1)= \frac{3}{8}.
\end{equation}
Therefore, color is correlated with digit parity but is never deterministically determined by it. This mechanism is shared across all environments.

Given parity $d$ and color $c$, the binary label $y\in\{0,1\}$ is generated according to environment-specific probabilities:
\begin{align}
    & \p(y=1 \mid d=0, c=1)=\alpha, \quad (y=1 \mid d=0, c=0)=\beta \\
    & \p(y=1 \mid d=1, c=1)=\gamma, \quad (y=1 \mid d=1, c=0)=\delta. \notag
\end{align}

We consider two regimes: a \emph{spurious regime}, where the label depends only on digit parity and not on color conditional on parity. This is enforced by setting $\alpha=\beta$ and $\gamma=\delta$. Under this regime, $y \perp c \mid d$, so color has no causal effect on the label. Nevertheless, because color is correlated with parity, color and label may still be marginally correlated. This regime satisfies the standard assumption that a subset of features (the digit content) is invariant across environments, while other features (color) are spurious.

Additionally, we consider a \emph{causal regime}, where the conditional independence above is violated, and the label depends on color within each parity class. We parameterize environments as
\begin{equation}
    (\alpha, \beta, \gamma, \delta) = (a + r, a-r, b +r, b-r).
\end{equation}
Here, $a$ and $b$ control the baseline effect of parity, and $r$ controls the strength and direction of the color effect. 
For any $r \neq 0$, 
\begin{equation}
    \p(y=1 \mid d, c=1) \neq \p(y=1 \mid d, c=0),
\end{equation}
so color has a direct influence on the label.

Training environments use positive values of $r$ with varying magnitude $r\in \{0.10, 0.15, 0.20\}$, inducing environment-dependent color–label association.
We consider the following test environments: (i) an environment identical to train environment with $r=0.15$, (ii) environments where the sign of $r$ is reversed, corresponding to a reversal of the color–label relationship with $r\in \{-0.10, -0.15, -0.20\}$, and (iii) an environment with $r=0$, where color has no effect.

This construction includes environments in which color is predictive within training data but does not correspond to a stable mechanism across environments.

\paragraph{Results}
The results in Figure \ref{fig:cmnist} show that NGMM achieves the smallest discrepancy from the Bayes-optimal predictor in every test environment. 
This holds both in the invariant setting, where color is correlated with the label but has no effect on it conditional on digit parity, and in the causal setting, where color directly affects the label and this effect varies across environments.

In the invariant setting, robust prediction requires avoiding reliance on color because its association with the label is not stable across environments.
In the causal setting, by contrast, color cannot be discarded as it remains  predictive of the label, even though its effect changes with the environment. NGMM performs better in both regimes because it models residual environment-level variation rather than forcing the representation to remove all environment-dependent information. As a result, it can avoid overusing unstable color information in the invariant case, while still retaining useful color information in the causal case.

\begin{figure} [ht]
    \centering
    \includegraphics[width=0.85\linewidth]{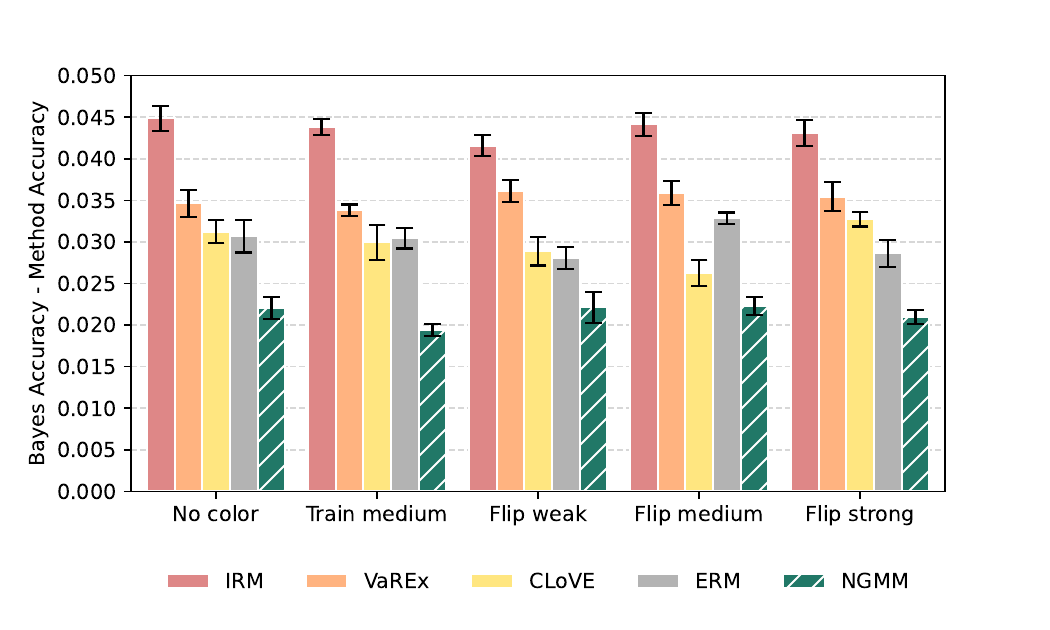}
    \caption{Results for the Colored MNIST experiment, reported as the gap between each method’s accuracy and the Bayes-optimal accuracy in per test environment. Across all settings, including the in-distribution no-color environment, and environments where the sign of the color effect is reversed, NGMM achieves the smallest discrepancy from the Bayes predictor.}
    \label{fig:cmnist}
\end{figure}

\subsection{OGB-MolPCBA}

The OGB-MolPCBA dataset provides data for real-world molecular prediction tasks. Each datapoint is a molecular graph, with atoms as nodes and bonds as edges, and the original benchmark provides 128 assays per molecule. Each assay defines a binary task: predicting from molecular structure whether the molecule is active in the corresponding laboratory screening test. In our experiment, the environments are molecular scaffold groups, which group molecules according to their core structural framework. 

We select one assay at a time and focus on assays with at least 1000 observations in the training split, and tasks in which the proportion of positive examples is at least $5\%$. This results in 4 candidate assays: PCBA-588342, PCBA-686979, PCBA-504332, and PCBA-686978. 
For all methods we use a small graph neural network with atom and bond encoders, followed by two GINE message-passing layers. 

\paragraph{Results} 
The results in Table 1 show that NGMM achieves the best performance across all four OGB-MolPCBA assays. NGMM attains the highest accuracy and the lowest NLL, Brier score, and ECE in every case, indicating improvements both in classification and in the quality of the predicted probabilities. Since the environments are molecular scaffold groups, these results suggest that explicitly modeling environment-level variation yields more robust prediction across structural shifts compared to invariance-based methods.

\begin{table}[ht]
\centering
\resizebox{\textwidth}{!}{%
\begin{tabular}{llcccc}
\toprule
Dataset & Method & Accuracy & NLL & Brier & ECE \\
\midrule
\multirow{5}{*}{PCBA-588342} & ERM & 0.810 $\pm$ 0.188 & 0.947 $\pm$ 1.151 & 0.167 $\pm$ 0.169 & 0.169 $\pm$ 0.182 \\
 & IRM & 0.862 $\pm$ 0.070 & 0.604 $\pm$ 0.289 & 0.118 $\pm$ 0.057 & 0.111 $\pm$ 0.067 \\
 & CLoVE & 0.884 $\pm$ 0.031 & 0.555 $\pm$ 0.158 & 0.103 $\pm$ 0.023 & 0.094 $\pm$ 0.028 \\
 & VaREx & 0.806 $\pm$ 0.163 & 0.708 $\pm$ 0.475 & 0.160 $\pm$ 0.124 & 0.162 $\pm$ 0.147 \\
 & NGMM & \textbf{0.908 $\pm$ 0.007} & \textbf{0.319 $\pm$ 0.029} & \textbf{0.082 $\pm$ 0.004} & \textbf{0.035 $\pm$ 0.016} \\
\midrule
\multirow{5}{*}{PCBA-686979} & ERM & 0.790 $\pm$ 0.036 & 1.187 $\pm$ 0.247 & 0.192 $\pm$ 0.023 & 0.183 $\pm$ 0.024 \\
 & IRM & 0.721 $\pm$ 0.151 & 1.636 $\pm$ 0.961 & 0.247 $\pm$ 0.126 & 0.240 $\pm$ 0.135 \\
 & CLoVE & 0.732 $\pm$ 0.092 & 1.289 $\pm$ 0.281 & 0.234 $\pm$ 0.068 & 0.231 $\pm$ 0.076 \\
 & VaREx & 0.721 $\pm$ 0.158 & 1.629 $\pm$ 0.849 & 0.247 $\pm$ 0.125 & 0.248 $\pm$ 0.137 \\
 & NGMM & \textbf{0.828 $\pm$ 0.000} & \textbf{0.508 $\pm$ 0.022} & \textbf{0.149 $\pm$ 0.003} & \textbf{0.093 $\pm$ 0.010} \\
\midrule
\multirow{5}{*}{PCBA-504332} & ERM & 0.649 $\pm$ 0.252 & 2.453 $\pm$ 2.514 & 0.318 $\pm$ 0.238 & 0.317 $\pm$ 0.246 \\
 & IRM & 0.653 $\pm$ 0.186 & 1.633 $\pm$ 1.437 & 0.290 $\pm$ 0.162 & 0.296 $\pm$ 0.172 \\
 & CLoVE & 0.728 $\pm$ 0.219 & 1.793 $\pm$ 1.931 & 0.245 $\pm$ 0.197 & 0.245 $\pm$ 0.206 \\
 & VaREx & 0.663 $\pm$ 0.231 & 2.051 $\pm$ 2.637 & 0.298 $\pm$ 0.223 & 0.298 $\pm$ 0.229 \\
 & NGMM & \textbf{0.866 $\pm$ 0.000} & \textbf{0.425 $\pm$ 0.022} & \textbf{0.120 $\pm$ 0.002} & \textbf{0.061 $\pm$ 0.011} \\
\midrule
\multirow{5}{*}{PCBA-686978} & ERM & 0.654 $\pm$ 0.204 & 2.436 $\pm$ 1.450 & 0.319 $\pm$ 0.188 & 0.314 $\pm$ 0.198 \\
 & IRM & 0.727 $\pm$ 0.030 & 1.874 $\pm$ 0.417 & 0.250 $\pm$ 0.022 & 0.243 $\pm$ 0.027 \\
 & CLoVE & 0.725 $\pm$ 0.045 & 1.490 $\pm$ 0.381 & 0.244 $\pm$ 0.025 & 0.229 $\pm$ 0.030 \\
 & VaREx & 0.693 $\pm$ 0.079 & 1.710 $\pm$ 0.178 & 0.273 $\pm$ 0.051 & 0.267 $\pm$ 0.054 \\
 & NGMM & \textbf{0.771 $\pm$ 0.002} & \textbf{0.619 $\pm$ 0.051} & \textbf{0.193 $\pm$ 0.004} & \textbf{0.129 $\pm$ 0.022} \\
\bottomrule
\end{tabular}%
}
\caption{Results for the OGB-MolPCBA scaffold generalization tasks. Across all four assays, NGMM achieves the highest accuracy and the lowest negative-log likelihood, Brier score, and expected calibration error. In all settings, NGMM also shows lowest variability across repetitions.} 
\label{tab:alpha_risk_specified}
\end{table}

\subsection{Camelyon-17}

The Camelyon17 dataset is a binary histopathology classification task. Each example is a pathology image patch, and the goal is to predict whether metastatic tumor tissue is present. Here the environments are hospitals. The goal is to learn predictors that generalize across hospital-specific variation in staining, scanning, and acquisition conditions.
For all methods we used the same small convolutional network. 

\paragraph{Results}

Here environments correspond to hospitals and no prior knowledge is available regarding whether the variation across hospitals affects the labeling mechanism.
Nevertheless, the results in Figure \ref{fig:camelyon} show that NGMM achieves the best overall performance on the held-out hospitals. It attains the highest accuracy and the lowest negative log-likelihood among all methods, indicating both stronger discrimination and better calibrated probabilistic predictions in unseen environments.

\begin{figure} [ht]
    \centering
    \includegraphics[width=0.8\linewidth]{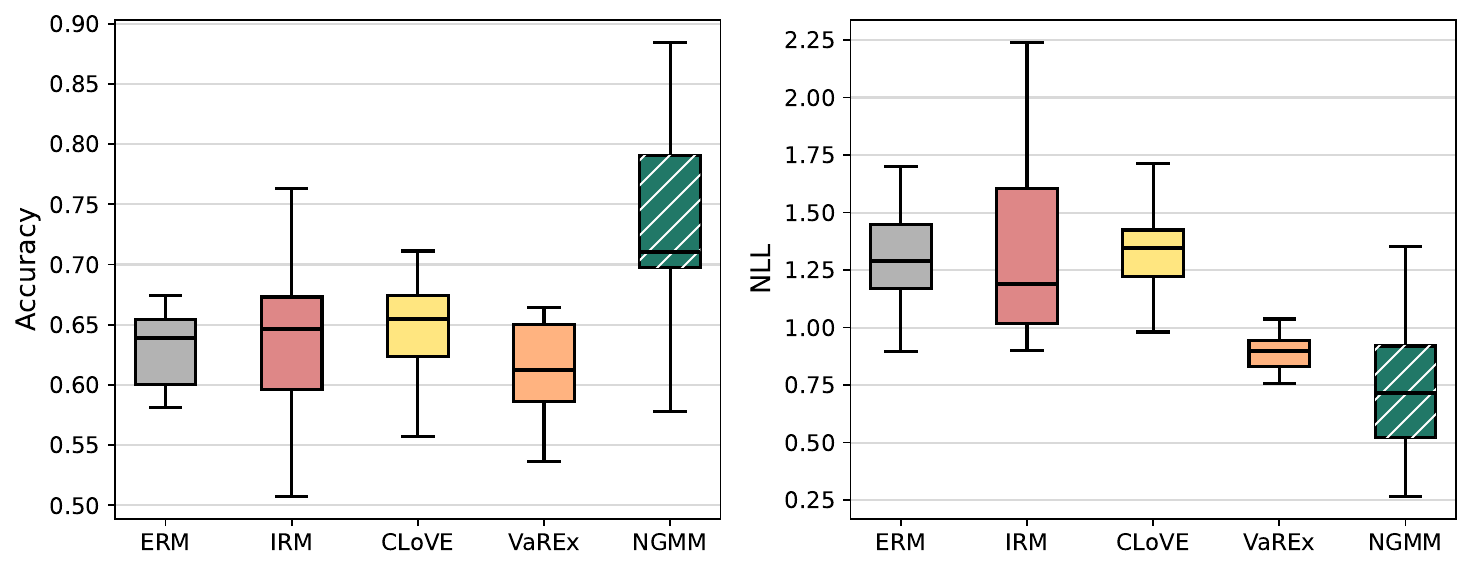}
    \caption{Results for the Camelyon-17 experiment on held-out hospitals. NGMM achieves the highest accuracy and the lowest negative log-likelihood among all methods, indicating the best predictive performance in unseen environments.
    }
    \label{fig:camelyon}
\end{figure}

\section{Comparison of robust and invariant representations} \label{sec:compare}

Having discussed robustness through marginalization of environment effects, we now turn to a comparison between this approach and invariance-based methods. In particular, we characterize when a robust representation, and especially one learned through a random-intercept model, is equivalent to an invariant one, and when it is expected to incur lower risk.

Invariant representation learning considers the probabilistic graphical model in Figure~\ref{fig:pgm_two_panel_a}, which corresponds to generative models of the form
\begin{equation} \label{eq:prob_model}
    \p(e, s, c, x, y) = \p(e)\, \p(s \mid e)\, \p(c \mid e)\, \p(x \mid s, c)\, \p(y \mid c).
\end{equation}

We assume that the model is non-trivial: both the causal features $c$ and the spurious features $s$ are non-degenerate latent variables (possibly aggregating several such variables correspondingly), and every solid arrow in Figure~\ref{fig:pgm_two_panel_a} represents a genuine dependence. That is, $x$ depends non-trivially on both $s$ and $c$, $y$ depends non-trivially on $c$, and $s$ depends non-trivially on $e$. We allow, though do not require, the causal factors $c$ to depend on $e$ as well.

\begin{definition}[Invariant representations]
Consider the generative model in Equation \eqref{eq:prob_model}, and a family $\mathcal{F}$ of representation functions $f: \mathcal{X} \to \mathcal{Z} \subseteq \R^d$.
We say that a representation $f \in \F$ is invariant if $y \perp e \mid f(x)$, and denote the subset of invariant representations as
\begin{equation}
    \Finva 
    \coloneqq
    \left\{
        f \in \F: 
        \;
        y \perp e \mid f(x)
    \right\}.
\end{equation}
\end{definition}

Invariant representations are defined through the conditional independence $y \perp e \mid f(x)$.
This captures the idea that, once the representation is given, the environment carries no further information about the target.
However, if an invariant representation exists, there may also be other representations that achieve the same predictive performance while retaining additional information that is unrelated to prediction, including information about the environment.

In contrast, if a robust representation is augmented with additional information that does not affect prediction, the resulting representation is still robust.
Hence, to compare invariance and robustness, we first introduce the following definition of \emph{minimality}.

\begin{definition}[Minimal representation] \label{def:minimal_robust}
A representation $f_{\varphi^\star}$ is called \emph{minimal} in $\F_0$, if for every $f_\varphi \in \F_0$, there exists a deterministic function $h_\varphi$ such that
\begin{equation}
    f_{\varphi^\star}(x)=h_\varphi(f_\varphi(x)).
\end{equation}
\end{definition}

Intuitively, the definition of minimality aligns with post-processing.
A minimal representation is one that can be recovered deterministically from any other representation in $\F_0$, and therefore does not retain unnecessary extra information.

The next proposition shows that, under the invariant data-generating model, once representations that carry such extra information are ruled out, a robust representation in $\F_0$ that is minimal, is also  invariant.

\begin{proposition} \label{prop:invariant_optimal}
Assume the model in Equation \eqref{eq:prob_model}, and let $\Phat$ be a family of conditional distributions indexed by $\theta=(\psi,\varphi)$.
Suppose that $\inf_{\theta\in\Theta} \risk_{\Phat}(\theta)=0$,
and that this infimum is attained.
Assume that there exists a representation $f_{\varphi^\dagger}\in\F_0$ that is invariant. Then, any robust representation $f_{\varphi^\star}\in\F_0$ that is minimal in $\F_0$ is also invariant.
\end{proposition}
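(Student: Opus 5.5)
The plan is to exploit minimality directly. The robust representation $f_{\varphi^\star}$ is minimal in $\F_0$ and the invariant representation $f_{\varphi^\dagger}$ also lies in $\F_0$. Definition \ref{def:minimal_robust}, applied with $f_\varphi = f_{\varphi^\dagger}$, therefore gives a deterministic map $h$ with $f_{\varphi^\star}(x) = h(f_{\varphi^\dagger}(x))$. So the robust representation is a coarsening of the invariant one. What must be shown is that this coarsening does not break the conditional independence $y \perp e \mid f(x)$.

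First I will characterize what membership in $\F_0$ means under the hypotheses. Since $\inf_\theta \risk_{\Phat}(\theta)=0$ is attained, Lemma \ref{lemma:same_min} shows that $f_\varphi \in \F_0$ exactly when some $\psi$ gives $\risk_{\Phat}(\psi,\varphi)=0$. Proposition \ref{prop:decomposition} splits this risk into two nonnegative terms, so both must vanish. In particular $I(y;x\mid f_\varphi(x))=0$, which means
\begin{equation}
\p(y \mid x) = \p(y \mid f_\varphi(x)) \qquad \p(x)\text{-a.s.}
\end{equation}
Every representation in $\F_0$ is therefore sufficient for $y$ given $x$, and this holds in particular for $f_{\varphi^\star}$ and $f_{\varphi^\dagger}$.

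Next I will use the structure of the model in Equation \eqref{eq:prob_model}. In that graph $y$ depends on $(x,e)$ only through $c$, so $\p(y\mid x,e)=\int \p(c\mid x,e)\p(y\mid c)\,dc$. Write $z=f_{\varphi^\star}(x)$. The tower property gives
\begin{equation}
\p(y\mid z,e)=\E_{\p(x\mid z,e)}\big[\p(y\mid x,e)\big].
\end{equation}
The goal is to show that this equals $\p(y\mid z)$.

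The invariant representation supplies the missing link. From $y\perp e\mid f_{\varphi^\dagger}(x)$ we get $\p(y\mid f_{\varphi^\dagger}(x),e)=\p(y\mid f_{\varphi^\dagger}(x))$. Sufficiency of $f_{\varphi^\dagger}$ gives $\p(y\mid f_{\varphi^\dagger}(x))=\p(y\mid x)$, and sufficiency of $f_{\varphi^\star}$ gives $\p(y\mid x)=\p(y\mid z)$. Since $z=h(f_{\varphi^\dagger}(x))$, the variable $z$ is determined by $f_{\varphi^\dagger}(x)$. Conditioning on $(z,e)$ and applying the tower property through $f_{\varphi^\dagger}(x)$ then yields
\begin{equation}
\p(y\mid z,e)=\E\big[\p(y\mid f_{\varphi^\dagger}(x),e)\mid z,e\big]=\E\big[\p(y\mid z)\mid z,e\big]=\p(y\mid z).
\end{equation}
This is exactly $y\perp e\mid f_{\varphi^\star}(x)$, so $f_{\varphi^\star}$ is invariant.

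I expect the main obstacle to be the step $\p(y\mid f_{\varphi^\dagger}(x))=\p(y\mid z)$ as functions of $f_{\varphi^\dagger}(x)$. It rests on both representations being sufficient, and it needs care with almost-sure equalities: the null sets differ between $\p(x)$ and $\p(x\mid e)$. If $\p(x\mid e)$ can be singular with respect to $\p(x)$ on some set, the argument may need an absolute-continuity assumption. Otherwise the environment-average risk itself can be used to pass the equality to $\p(x\mid e)$-a.s., via Lemma \ref{lemma:same_min}.
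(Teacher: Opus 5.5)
Your proposal is correct and follows the paper's proof essentially step for step. Zero marginal risk together with Proposition~\ref{prop:decomposition} makes both $f_{\varphi^\star}$ and $f_{\varphi^\dagger}$ sufficient; minimality gives $f_{\varphi^\star}=h(f_{\varphi^\dagger})$; and the tower property through $f_{\varphi^\dagger}(x)$, combined with invariance of $f_{\varphi^\dagger}$, yields $\p(y\mid f_{\varphi^\star}(x),e)=\p(y\mid f_{\varphi^\star}(x))$ (your paragraph on the causal structure, like the paper's analogous step, is never used). The null-set concern is harmless: since $\p(x)=\int \p(x\mid e)\,\p(e)\,de$, any $\p(x)$-null set is $\p(x\mid e)$-null for $\p(e)$-almost every $e$, so no extra absolute-continuity assumption is needed.
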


The proof is provided in Appendix \ref{sup:proof_invariant_optimal}. The key idea of Proposition \ref{prop:invariant_optimal} is that 
when a representation already captures everything about $x$ that matters for prediction, any remaining dependence on the environment $e$ is redundant.
Therefore, a minimal robust representation has no need to retain any additional environment-specific information. 

However, the conditions in Proposition \ref{prop:invariant_optimal} in practice require full recovery of $c$ from $x$. While this may hold in some settings, it is not the case when $x$ is only a noisy or indirect proxy for $c$. In such cases, a robust (or $\varepsilon$-robust) representation may achieve lower risk by retaining information that is not invariant, but remains predictive.
In general, invariance and robustness do not coincide.

The distinction between robustness and invariance is especially clear for random-intercept models. Invariant methods posit that, after conditioning on a suitable representation, the effect of the environment disappears. Random-intercept models instead are designed for settings where residual environment variation remains and is modeled explicitly. Accordingly, they cannot recover an invariant representation exactly: if $c=f(x)$ and the conditional distribution includes an environment-specific random intercept $\gamma_e$, then  $\p(y \mid c,e)\neq \p(y \mid c)$. 

On the other hand, when the environment $e$ has a direct effect on $y$, the true data-generating process corresponds to the graphical model in Figure \ref{fig:pgm_two_panel_b}:
\begin{equation} \label{eq:prob_model2}
    \p(e, s, c, x, y) = \p(e)\, \p(s \mid e)\, \p(c \mid e)\, \p(x \mid s, c)\, \p(y \mid c, e).
\end{equation}
In this case the assumptions of invariant learning are violated, while the random-intercept model is recovered as the special case in which the effect of $e$ on $y$ enters through an additive intercept shift. 

In practice, the data-generating process may be closer to an invariant model or to a random-intercept model, and which approach performs better depends on which approximation is more appropriate. 

Let 
\begin{equation}
f_{\mathrm{ind}}
\in
\argmin_f I(y;e \mid f(x))
\end{equation}
be a representation targeting approximate invariance, and let $f_{\mathrm{RI}}$ be a population maximum-likelihood representation within the random-intercept family. 

Applying the the chain rule for conditional mutual information twice, in particular for $f\in \{f_{\mathrm{ind}}, f_{\mathrm{RI}}\}$, we have that
\begin{equation}
I(y;x \mid f(x))
=
I(y;e \mid f(x))
+
I(y;x \mid e,f(x))
-
I(y;e \mid x),
\end{equation}
and
\begin{equation}
I(y;x,e \mid f(x))
=
I(y;x \mid f(x))
+
I(y;e \mid x,f(x)).
\end{equation}
Since $f(x)$ is a deterministic function of $x$, conditioning on $f(x)$ adds no information once $x$ is given, and therefore
\begin{equation}
I(y;e \mid x,f(x)) = I(y;e \mid x).
\end{equation}
Combining the two expansions yields
\begin{equation}
I(y;x \mid f(x))
=
I(y;e \mid f(x))
+
I(y;x \mid e,f(x))
-
I(y;e \mid x).
\end{equation}

This shows that $f_{\mathrm{RI}}$ has smaller irreducible risk exactly when
\begin{equation} \label{eq:compare}
\delta
\coloneqq
\Bigl[
I(y;x \mid e,f_{\mathrm{ind}}(x))
-
I(y;x \mid e,f_{\mathrm{RI}}(x))
\Bigr]
-
\Bigl[
I(y;e \mid f_{\mathrm{RI}}(x))
-
I(y;e \mid f_{\mathrm{ind}}(x))
\Bigr] > 0.
\end{equation}
This inequality can be viewed as a tradeoff: forcing the representation to be as independent of $e$ as possible, may discard relevant predictive information in $x$ that remains useful once $e$ is known. When the effect of $e$ on $y$ is weak and most of the $y$-relevant information in $c$ can be recovered from $x$ alone, an invariance-seeking representation can be nearly optimal. By contrast, when $c$ cannot be well recovered from $x$, but $x$ contains additional environment-linked information that helps predict $y$, a random-intercept representation may be preferable. 

The empirical results in \S \ref{sec:simulation} evaluate exactly this tradeoff: When the direct effect of the environment is well captured by an additive intercept shift, the random-intercept model is correctly specified and therefore recovers the optimal marginal predictor. More importantly, when environment-linked information in $x$ helps predict $y$, modeling the residual environment effect can be preferable to trying to remove it.

This analysis is summarized in Figure \ref{fig:robust_invariant_summary}.

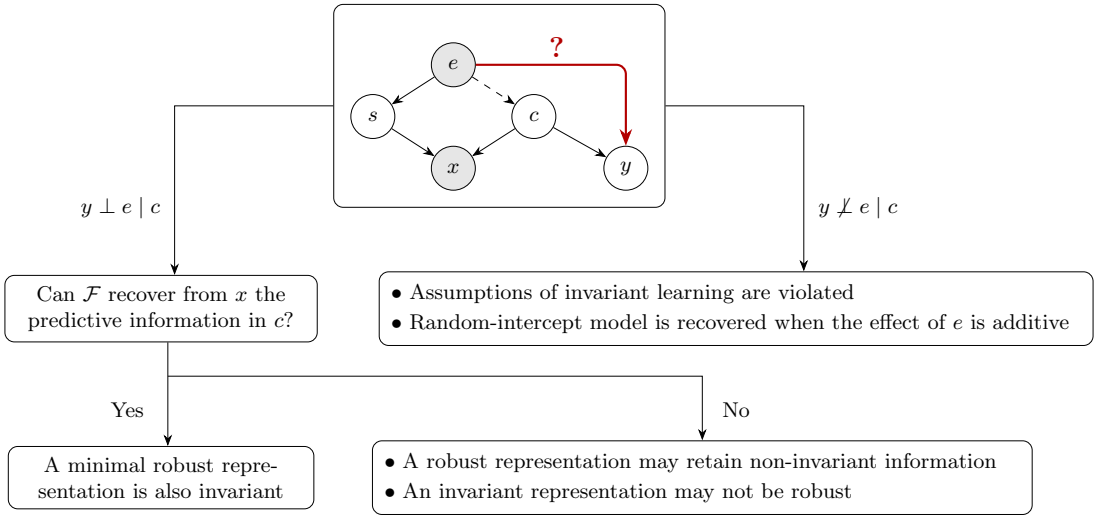
\begin{figure}[t]
\centering
\resizebox{0.95\linewidth}{!}{%
\begin{tikzpicture}[
    >=Stealth,
    font=\footnotesize,
    arrow/.style={thin,->},
    box/.style={
        draw,
        rounded corners,
        inner sep=4.5pt,
        align=center
    },
    bulletbox/.style={
        draw,
        rounded corners,
        inner sep=5pt,
        align=left
    },
    var/.style={circle, draw=black, thin, minimum size=6.5mm, inner sep=0pt, font=\footnotesize},
    grayvar/.style={var, fill=gray!20}
]

\node[draw, rounded corners, inner sep=7pt] (mini) at (0,0) {
\begin{tikzpicture}[>=Stealth, scale=0.85]
  \tikzset{
    var/.style={circle, draw=black, thin, minimum size=6.5mm, inner sep=0pt, font=\footnotesize},
    grayvar/.style={var, fill=gray!20}
  }

  \node[grayvar] (e) at (0,1.8) {$e$};
  \node[var]     (s) at (-1.4,0.9) {$s$};
  \node[var]     (c) at (1.4,0.9) {$c$};
  \node[grayvar] (x) at (0,0) {$x$};
  \node[var]     (y) at (3.0,0.0) {$y$};

  \draw[->] (e) -- (s);
  \draw[->] (s) -- (x);
  \draw[->] (c) -- (x);
  \draw[->] (c) -- (y);
  \draw[dashed, ->] (e) -- (c);
  \draw[->, red!70!black, line width=0.9pt] (e.east) -| (y.north);
  \node[text=red!70!black, font=\bfseries\large] at (1.8,2.1) {?};
\end{tikzpicture}
};

\node[box, text width=4.3cm] (question) at (-5.0,-3.0)
{Can $\F$ recover from $x$ the predictive information in $c$?};

\node[bulletbox, text width=10.2cm] (nocase) at (3.5,-3.0) {%
$\bullet$ Assumptions of invariant learning are violated\\[2pt]
$\bullet$ Random-intercept model is recovered when the effect of $e$ is additive
};

\draw[arrow] ($(mini.west)+(-0.0,0)$) -| ($(question.north)+(0.2,0)$);
\draw[arrow] ($(mini.east)+(0.0,0)$) -| ($(nocase.north)+(1,0)$);

\node at (-5.6,-1.50) {$y \perp e \mid c$};
\node at (5.3,-1.50) {$y \not \perp e \mid c$};

\node[box, text width=4.2cm] (invrob) at (-5,-5.5)
{A minimal robust representation is also invariant};

\node[bulletbox, text width=9.4cm] (tradeoff) at (+3,-5.5) {%
$\bullet$ A robust representation may retain non-invariant information\\[2pt]
$\bullet$ An invariant representation may not be robust
};

\draw[arrow] ($(question.south)+(0.1,0)$) -- ($(invrob.north)+(0.1,0)$);
\draw[arrow] ($(question.south)+(0.1,-0.5)$) -| ($(tradeoff.north)+(-0.0,0)$);

\node at (-5.5,-4.5) {Yes};
\node at (3.5,-4.5) {No};

\end{tikzpicture}%
}
\caption{Summary of the comparison between robustness and invariance.}
\label{fig:robust_invariant_summary}
\end{figure}

\section{Discussion} \label{sec:discuss}

We studied environment-robust representation learning in settings where the environment may directly affect the target. Such settings lie outside the standard invariance framework, which requires that, after conditioning on an appropriate representation, the predictive relationship no longer depends on the environment.

Invariance does not posit a structural model for how the environment affects the data-generating process. Accordingly, invariance-targeting methods seek a general solution: a representation under which these effects disappear. This is valuable when the goal is to identify stable underlying mechanisms. However, when the goal is prediction, our results show that explicitly modeling environment effects, rather than attempting to remove them, can lead to better performance. Neural random-intercept models provide a simple yet flexible way to do so.

We further showed that risk of such predictors decomposes into three terms: an irreducible term, a representation term $I(y;x \mid f(x))$, which measures the predictive information lost by compressing $x$ to $f(x)$, and an additional approximation term induced by restricting the predictor, conditional on that representation, to the marginalized family. This implies that when the random-intercept risk is small, the learned representation must retain substantial information in $x$ that is relevant for predicting $y$.

Our comparison with invariant representation learning analyzes when invariance is an appropriate target. Under the standard causal model, invariance and robustness coincide only when the representation preserves all posterior predictive information about $c$ that is relevant for $y$, and conditioning further on $e$ does not change that quantity. When this condition fails, enforcing invariance may remove predictive information that remains useful for robust prediction. This introduces a tradeoff between retaining a richer representation and increased environment dependence.
When the environment affects the target directly, invariance is unattainable altogether. In that case, robustness aiming methods such as the random-intercept model offer a natural alternative.

The empirical results are consistent with our analysis. In the    simulation, the gap between NGMM and invariance-targeting methods varies with the identified tradeoff. In all our experiments, the random-intercept model outperforms invariance-targeting methods. 

Nevertheless, the random-intercept family models environment effects through a single additive latent variable. This is appropriate when environments differ mainly through shifts of the shared predictor, but it does not capture other forms of heterogeneity.
Extensions to richer random-effect structures and to multimodal environment effects are a promising direction for future work. 

More broadly, our results suggest that in many settings robust representation learning should be formulated in terms of the prediction target and the class of shifts one aims to achieve robustness to, rather than through invariance alone.

\newpage

\bibliography{references}

\begin{thebibliography}{37}
\providecommand{\natexlab}[1]{#1}
\providecommand{\url}[1]{\texttt{#1}}
\expandafter\ifx\csname urlstyle\endcsname\relax
  \providecommand{\doi}[1]{doi: #1}\else
  \providecommand{\doi}{doi: \begingroup \urlstyle{rm}\Url}\fi

\bibitem[Ahuja et~al.(2020)Ahuja, Shanmugam, Varshney, and Dhurandhar]{ahuja2020invariant}
Kartik Ahuja, Karthikeyan Shanmugam, Kush Varshney, and Amit Dhurandhar.
\newblock Invariant risk minimization games.
\newblock In \emph{International Conference on Machine Learning}, pages 145--155. PMLR, 2020.

\bibitem[Arjovsky et~al.(2019)Arjovsky, Bottou, Gulrajani, and Lopez-Paz]{IRM}
Martin Arjovsky, L{\'e}on Bottou, Ishaan Gulrajani, and David Lopez-Paz.
\newblock Invariant risk minimization.
\newblock \emph{arXiv preprint arXiv:1907.02893}, 2019.

\bibitem[Baayen et~al.(2008)Baayen, Davidson, and Bates]{baayen2008mixed}
R~Harald Baayen, Douglas~J Davidson, and Douglas~M Bates.
\newblock Mixed-effects modeling with crossed random effects for subjects and items.
\newblock \emph{Journal of Memory and Language}, 59\penalty0 (4):\penalty0 390--412, 2008.

\bibitem[Ben-Tal et~al.(2013)Ben-Tal, Den~Hertog, De~Waegenaere, Melenberg, and Rennen]{DRO}
Aharon Ben-Tal, Dick Den~Hertog, Anja De~Waegenaere, Bertrand Melenberg, and Gijs Rennen.
\newblock Robust solutions of optimization problems affected by uncertain probabilities.
\newblock \emph{Management Science}, 59\penalty0 (2):\penalty0 341--357, 2013.

\bibitem[Bernal-Rusiel et~al.(2013)Bernal-Rusiel, Greve, Reuter, Fischl, Sabuncu, Initiative, et~al.]{bernal2013statistical}
Jorge~L Bernal-Rusiel, Douglas~N Greve, Martin Reuter, Bruce Fischl, Mert~R Sabuncu, Alzheimer's Disease~Neuroimaging Initiative, et~al.
\newblock Statistical analysis of longitudinal neuroimage data with linear mixed effects models.
\newblock \emph{NeuroImage}, 66:\penalty0 249--260, 2013.

\bibitem[Bryk and Raudenbush(1992)]{bryk1992hierarchical}
Anthony~S Bryk and Stephen~W Raudenbush.
\newblock \emph{Hierarchical Linear Models: Applications and Data Analysis Methods}.
\newblock Sage Publications, Inc, 1992.

\bibitem[Chang et~al.(2020)Chang, Zhang, Yu, and Jaakkola]{chang2020invariant}
Shiyu Chang, Yang Zhang, Mo~Yu, and Tommi Jaakkola.
\newblock Invariant rationalization.
\newblock In \emph{International Conference on Machine Learning}, pages 1448--1458. PMLR, 2020.

\bibitem[Duchi and Namkoong(2021)]{duchi2021learning}
John~C Duchi and Hongseok Namkoong.
\newblock Learning models with uniform performance via distributionally robust optimization.
\newblock \emph{The Annals of Statistics}, 49\penalty0 (3):\penalty0 1378--1406, 2021.

\bibitem[Duchi et~al.(2021)Duchi, Glynn, and Namkoong]{duchi2021statistics}
John~C Duchi, Peter~W Glynn, and Hongseok Namkoong.
\newblock Statistics of robust optimization: A generalized empirical likelihood approach.
\newblock \emph{Mathematics of Operations Research}, 46\penalty0 (3):\penalty0 946--969, 2021.

\bibitem[Ganin et~al.(2016)Ganin, Ustinova, Ajakan, Germain, Larochelle, Laviolette, March, and Lempitsky]{ganin2016domain}
Yaroslav Ganin, Evgeniya Ustinova, Hana Ajakan, Pascal Germain, Hugo Larochelle, Fran{\c{c}}ois Laviolette, Mario March, and Victor Lempitsky.
\newblock Domain-adversarial training of neural networks.
\newblock \emph{Journal of Machine Learning Research}, 17\penalty0 (59):\penalty0 1--35, 2016.

\bibitem[Gelman et~al.(2007)Gelman, Fagan, and Kiss]{gelman2007analysis}
Andrew Gelman, Jeffrey Fagan, and Alex Kiss.
\newblock An analysis of the new york city police department's “stop-and-frisk” policy in the context of claims of racial bias.
\newblock \emph{Journal of the American Statistical Association}, 102\penalty0 (479):\penalty0 813--823, 2007.

\bibitem[Idrissi et~al.(2022)Idrissi, Arjovsky, Pezeshki, and Lopez-Paz]{idrissi2022simple}
Badr~Youbi Idrissi, Martin Arjovsky, Mohammad Pezeshki, and David Lopez-Paz.
\newblock Simple data balancing achieves competitive worst-group-accuracy.
\newblock In \emph{Conference on Causal Learning and Reasoning}, pages 336--351. PMLR, 2022.

\bibitem[Judd et~al.(2012)Judd, Westfall, and Kenny]{judd2012treating}
Charles~M Judd, Jacob Westfall, and David~A Kenny.
\newblock Treating stimuli as a random factor in social psychology: A new and comprehensive solution to a pervasive but largely ignored problem.
\newblock \emph{Journal of Personality and Social Psychology}, 103\penalty0 (1):\penalty0 54, 2012.

\bibitem[Krueger et~al.(2021)Krueger, Caballero, Jacobsen, Zhang, Binas, Zhang, Le~Priol, and Courville]{krueger2021out}
David Krueger, Ethan Caballero, Joern-Henrik Jacobsen, Amy Zhang, Jonathan Binas, Dinghuai Zhang, Remi Le~Priol, and Aaron Courville.
\newblock Out-of-distribution generalization via risk extrapolation ({REx}).
\newblock In \emph{International Conference on Machine Learning}, pages 5815--5826. PMLR, 2021.

\bibitem[Lin et~al.(2022)Lin, Dong, Wang, and Zhang]{lin2022bayesian}
Yong Lin, Hanze Dong, Hao Wang, and Tong Zhang.
\newblock Bayesian invariant risk minimization.
\newblock In \emph{Proceedings of the IEEE/CVF Conference on Computer Vision and Pattern Recognition}, pages 16021--16030, 2022.

\bibitem[Listgarten et~al.(2010)Listgarten, Kadie, Schadt, and Heckerman]{listgarten2010correction}
Jennifer Listgarten, Carl Kadie, Eric~E Schadt, and David Heckerman.
\newblock Correction for hidden confounders in the genetic analysis of gene expression.
\newblock \emph{Proceedings of the National Academy of Sciences}, 107\penalty0 (38):\penalty0 16465--16470, 2010.

\bibitem[Liu et~al.(2021)Liu, Haghgoo, Chen, Raghunathan, Koh, Sagawa, Liang, and Finn]{liu2021just}
Evan~Z Liu, Behzad Haghgoo, Annie~S Chen, Aditi Raghunathan, Pang~Wei Koh, Shiori Sagawa, Percy Liang, and Chelsea Finn.
\newblock Just train twice: Improving group robustness without training group information.
\newblock In \emph{International Conference on Machine Learning}, pages 6781--6792. PMLR, 2021.

\bibitem[Lu et~al.(2021)Lu, Wu, Hern{\'a}ndez-Lobato, and Sch{\"o}lkopf]{lu2021invariant}
Chaochao Lu, Yuhuai Wu, Jos{\'e}~Miguel Hern{\'a}ndez-Lobato, and Bernhard Sch{\"o}lkopf.
\newblock Invariant causal representation learning for out-of-distribution generalization.
\newblock In \emph{International Conference on Learning Representations}, 2021.

\bibitem[Lyu et~al.(2023)Lyu, Kim, and Suk]{lyu2023estimating}
Weicong Lyu, Jee-Seon Kim, and Youmi Suk.
\newblock Estimating heterogeneous treatment effects within latent class multilevel models: A bayesian approach.
\newblock \emph{Journal of Educational and Behavioral Statistics}, 48\penalty0 (1):\penalty0 3--36, 2023.

\bibitem[Mahajan et~al.(2021)Mahajan, Tople, and Sharma]{mahajan2021domain}
Divyat Mahajan, Shruti Tople, and Amit Sharma.
\newblock Domain generalization using causal matching.
\newblock In \emph{International Conference on Machine Learning}, pages 7313--7324. PMLR, 2021.

\bibitem[Nye et~al.(2000)Nye, Hedges, and Konstantopoulos]{nye2000effects}
Barbara Nye, Larry~V Hedges, and Spyros Konstantopoulos.
\newblock The effects of small classes on academic achievement: The results of the tennessee class size experiment.
\newblock \emph{American Educational Research Journal}, 37\penalty0 (1):\penalty0 123--151, 2000.

\bibitem[Payne et~al.(2015)Payne, Lee, and Federmeier]{payne2015revisiting}
Brennan~R Payne, Chia-Lin Lee, and Kara~D Federmeier.
\newblock Revisiting the incremental effects of context on word processing: Evidence from single-word event-related brain potentials.
\newblock \emph{Psychophysiology}, 52\penalty0 (11):\penalty0 1456--1469, 2015.

\bibitem[Peters et~al.(2016)Peters, B{\"u}hlmann, and Meinshausen]{peters2016causal}
Jonas Peters, Peter B{\"u}hlmann, and Nicolai Meinshausen.
\newblock Causal inference by using invariant prediction: Identification and confidence intervals.
\newblock \emph{Journal of the Royal Statistical Society Series B: Statistical Methodology}, 78\penalty0 (5):\penalty0 947--1012, 2016.

\bibitem[Peters et~al.(2017)Peters, Janzing, and Sch{\"o}lkopf]{peters2017elements}
Jonas Peters, Dominik Janzing, and Bernhard Sch{\"o}lkopf.
\newblock \emph{Elements of Causal Inference: Foundations and Learning Algorithms}.
\newblock The MIT Press, 2017.

\bibitem[Pfister et~al.(2019)Pfister, B{\"u}hlmann, and Peters]{pfister2019invariant}
Niklas Pfister, Peter B{\"u}hlmann, and Jonas Peters.
\newblock Invariant causal prediction for sequential data.
\newblock \emph{Journal of the American Statistical Association}, 114\penalty0 (527):\penalty0 1264--1276, 2019.

\bibitem[Piratla et~al.(2021)Piratla, Netrapalli, and Sarawagi]{piratla2021focus}
Vihari Piratla, Praneeth Netrapalli, and Sunita Sarawagi.
\newblock Focus on the common good: Group distributional robustness follows.
\newblock In \emph{International Conference on Learning Representations}, 2021.

\bibitem[Raudenbush et~al.(1999)Raudenbush, Fotiu, and Cheong]{raudenbush1999synthesizing}
Stephen~W Raudenbush, Randall~P Fotiu, and Yuk~Fai Cheong.
\newblock Synthesizing results from the trial state assessment.
\newblock \emph{Journal of Educational and Behavioral Statistics}, 24\penalty0 (4):\penalty0 413--438, 1999.

\bibitem[Rojas-Carulla et~al.(2018)Rojas-Carulla, Sch{\"o}lkopf, Turner, and Peters]{rojas2018invariant}
Mateo Rojas-Carulla, Bernhard Sch{\"o}lkopf, Richard Turner, and Jonas Peters.
\newblock Invariant models for causal transfer learning.
\newblock \emph{Journal of Machine Learning Research}, 19\penalty0 (36):\penalty0 1--34, 2018.

\bibitem[Rothenh{\"a}usler et~al.(2021)Rothenh{\"a}usler, Meinshausen, B{\"u}hlmann, and Peters]{rothenhausler2021anchor}
Dominik Rothenh{\"a}usler, Nicolai Meinshausen, Peter B{\"u}hlmann, and Jonas Peters.
\newblock Anchor regression: Heterogeneous data meet causality.
\newblock \emph{Journal of the Royal Statistical Society Series B: Statistical Methodology}, 83\penalty0 (2):\penalty0 215--246, 2021.

\bibitem[Sagawa et~al.(2019)Sagawa, Koh, Hashimoto, and Liang]{sagawa2019distributionally}
Shiori Sagawa, Pang~Wei Koh, Tatsunori~B Hashimoto, and Percy Liang.
\newblock Distributionally robust neural networks.
\newblock In \emph{International Conference on Learning Representations}, 2019.

\bibitem[Sinha et~al.(2017)Sinha, Namkoong, Volpi, and Duchi]{sinha2017certifying}
Aman Sinha, Hongseok Namkoong, Riccardo Volpi, and John Duchi.
\newblock Certifying some distributional robustness with principled adversarial training.
\newblock \emph{arXiv preprint arXiv:1710.10571}, 2017.

\bibitem[Slavutsky et~al.(2026)Slavutsky, Salazar, and Blei]{slavutsky2026neural}
Yuli Slavutsky, Sebastian Salazar, and David Blei.
\newblock Neural generalized mixed-effects models.
\newblock \emph{arXiv preprint arXiv:2604.10976}, 2026.

\bibitem[Visscher et~al.(2003)Visscher, Miezin, Kelly, Buckner, Donaldson, McAvoy, Bhalodia, and Petersen]{visscher2003mixed}
Kristina~M Visscher, Francis~M Miezin, James~E Kelly, Randy~L Buckner, David~I Donaldson, Mark~P McAvoy, Vidya~M Bhalodia, and Steven~E Petersen.
\newblock Mixed blocked/event-related designs separate transient and sustained activity in {fMRI}.
\newblock \emph{NeuroImage}, 19\penalty0 (4):\penalty0 1694--1708, 2003.

\bibitem[Wald et~al.(2021)Wald, Feder, Greenfeld, and Shalit]{wald2021calibration}
Yoav Wald, Amir Feder, Daniel Greenfeld, and Uri Shalit.
\newblock On calibration and out-of-domain generalization.
\newblock \emph{Advances in Neural Information Processing Systems}, 34:\penalty0 2215--2227, 2021.

\bibitem[Wei et~al.(2023)Wei, Narasimhan, Amid, Chu, Liu, and Kumar]{wei2023distributionally}
Jiaheng Wei, Harikrishna Narasimhan, Ehsan Amid, Wen-Sheng Chu, Yang Liu, and Abhishek Kumar.
\newblock Distributionally robust post-hoc classifiers under prior shifts.
\newblock In \emph{International Conference on Learning Representations}, 2023.

\bibitem[Zhang et~al.(2010)Zhang, Ersoz, Lai, Todhunter, Tiwari, Gore, Bradbury, Yu, Arnett, Ordovas, et~al.]{zhang2010mixed}
Zhiwu Zhang, Elhan Ersoz, Chao-Qiang Lai, Rory~J Todhunter, Hemant~K Tiwari, Michael~A Gore, Peter~J Bradbury, Jianming Yu, Donna~K Arnett, Jose~M Ordovas, et~al.
\newblock Mixed linear model approach adapted for genome-wide association studies.
\newblock \emph{Nature Genetics}, 42\penalty0 (4):\penalty0 355--360, 2010.

\bibitem[Zhou and Stephens(2012)]{zhou2012genome}
Xiang Zhou and Matthew Stephens.
\newblock Genome-wide efficient mixed-model analysis for association studies.
\newblock \emph{Nature Genetics}, 44\penalty0 (7):\penalty0 821--824, 2012.

\end{thebibliography}

\newpage

\appendix

\section{Proofs}
\subsection{Proof of Lemma \ref{lemma:same_min}} \label{sup:proof_same_min}

\begin{proof}

For a fixed $\theta \in \Theta$, the environment-average risk is
\begin{align}
     \bar{\risk}_{\Phat}(\theta) & \coloneqq 
    \E_{\p(e,x)} \left[ 
    \KL\left( \p(y \mid x, e) \, \Vert \, \phat_\theta(y \mid x) \right)
    \right] \\
    & = \E_{\p(e,x)}\left[
    \E_{\p(y \mid e,x)}
    \left[ \log \frac{\p(y \mid x,e)}{\phat_\theta(y \mid x)}\right]
    \right] \\
    & = \E_{\p(e,x, y)}\left[
    \log \frac{\p(y \mid x,e)}{\phat_\theta(y \mid x)}
    \right].
\end{align}
By adding and subtracting $\log \p(y \mid x)$, we get
\begin{equation}
\bar{\risk}_{\Phat}(\theta)
=
\E_{\p(e,x,y)}
\left[
\log \frac{\p(y \mid x,e)}{\p(y \mid x)}
\right]
+
\E_{\p(e,x,y)}
\left[
\log \frac{\p(y \mid x)}{\phat_\theta(y \mid x)}
\right].
\end{equation}

Note that the first term can be expressed as
\begin{equation}
\E_{\p(e,x,y)}
\left[
\log \frac{\p(y \mid x,e)}{\p(y \mid x)}
\right]
=
\E_{\p(e,x)}
\left[
\KL\left(
\p(y \mid x,e)
\Vert
\p(y \mid x)
\right)
\right],
\end{equation}
and it does not depend on $\theta$.

For the second term, we have that 
\begin{equation}
\E_{\p(e,x,y)}
\left[
\log \frac{\p(y \mid x)}{\phat_\theta(y \mid x)}
\right]
=
\E_{\p(x,y)}
\left[
\log \frac{\p(y \mid x)}{\phat_\theta(y \mid x)}
\right]
=
\risk_{\Phat}(\theta).
\end{equation}
Therefore,
\begin{equation}
\bar{\risk}_{\Phat}(\theta)
=
\risk_{\Phat}(\theta)
+
\E_{\p(e,x)}
\left[
\KL\left(
\p(y \mid x,e)
\, \Vert \, 
\p(y \mid x)
\right)
\right].
\end{equation}
\end{proof}

\subsection{Proof of Proposition \ref{prop:decomposition}} \label{sup:proof_decomposition}

\begin{proof}
By definition,
\begin{align}
\risk_{\Phat}(\theta)
& =
\E_{\p(x)}
\left[
\KL\left(
\p(y \mid x)
\Vert
\phat_{\theta}\left(y \mid x\right)
\right)
\right] \\
& =
\E_{\p(x)\p(y\mid x)}
\left[
\log
\frac{
\p(y \mid x)
}{
\phat_{\theta}\left(y \mid x\right)
}
\right].
\end{align}
By adding and subtracting $\log \p\left(y \mid f_\varphi(x)\right)$, we get
\begin{equation} \label{eq:decomp}
\begin{aligned}
\risk_{\Phat}(\theta)
&=
\E_{\p(x)\p(y\mid x)}
\left[
\log
\frac{
\p(y \mid x)
}{
\p\left(y \mid f_\varphi(x)\right)
}
\right]
+
\E_{\p(x)\p(y\mid x)}
\left[
\log
\frac{
\p\left(y \mid f_\varphi(x)\right)
}{
\phat_{\theta}\left(y \mid x\right)
}
\right],
\end{aligned}
\end{equation}
where the first term can be rewritten as
\begin{align}
\E_{\p(x)\p(y\mid x)}
\left[
\log
\frac{
\p(y \mid x)
}{
\p\left(y \mid f_\varphi(x)\right)
}
\right]
&=
\E_{\p(x)}
\left[
\KL\left(
\p(y \mid x)
\, \Vert \,
\p\left(y \mid f_\varphi(x)\right)
\right)
\right].
\end{align}
Below for brevity we drop the index $\varphi$ and write $f$ instead of $f_\varphi$.

We will first show that
\begin{equation}
\E_{\p(x)}
\left[
\KL\left(
\p(y \mid x)
\, \Vert \,
\p\left(y \mid f(x)\right)
\right)
\right]
=
I\left(y;x \mid f(x)\right).
\end{equation}
Note that
\begin{align}
\p(x,y \mid f(x))
&=
\p(x \mid f(x)) \, \p(y \mid x, f(x)) \\
&=
\p(x \mid f(x)) \, \p(y \mid x),
\end{align}
where the second equality follows from $f$ being a deterministic function.
Then,
\begin{align}
I\left(y;x \mid f(x)\right)
&\coloneqq
\E_{\p(f(x))}
\left[
\KL\left(
\p(x,y \mid f(x))
\, \Vert \,
\p(x \mid f(x))\, \p(y \mid f(x))
\right)
\right] \\
&=
\E_{\p(f(x))}
\left[
\KL\left(
\p(x \mid f(x))\, \p(y \mid x)
\, \Vert \,
\p(x \mid f(x))\, \p(y \mid f(x))
\right)
\right],
\end{align}
where
\begin{align}
& \KL\left(\p(x\mid f(x))\,\p(y\mid x)\Vert\p(x\mid f(x))\,\p(y\mid f(x))\right)\\
& =
\E_{\p(x\mid f(x))}
\left[
\E_{\p(y\mid x)}
\left[
\log
\frac{
\p(x\mid f(x))\,\p(y\mid x)
}{
\p(x\mid f(x))\,\p(y\mid f(x))
}
\right]
\right]
\\
& =
\E_{\p(x\mid f(x))}
\left[
\KL\left(
\p(y\mid x)
\, \Vert \,
\p(y\mid f(x))
\right)
\right].
\end{align}
Therefore,
\begin{align}
I\left(y;x \mid f(x)\right)
&=
\E_{\p(f(x))}
\left[
\E_{\p(x \mid f(x))}
\left[
\KL\left(
\p(y \mid x)
\, \Vert \,
\p(y \mid f(x))
\right)
\right]
\right],
\end{align}
and since $\p(x, f(x))=\p(x)$ for any deterministic $f$,
\begin{align}
I\left(y;x \mid f(x)\right)
&=
\E_{\p(x)}
\left[
\KL\left(
\p(y \mid x)
\, \Vert \,
\p(y \mid f(x))
\right)
\right].
\end{align}

We now move to the second term in Equation \eqref{eq:decomp}.
Recall that $\phat_\theta(y \mid x) = \phat_\psi(y \mid f_\varphi(x))$, and denote 

\begin{equation}
Z \coloneqq f_\varphi(X), \qquad
g(y,z) \coloneqq \log \frac{\p(y \mid z)}{\phat_\psi(y \mid z)}.
\end{equation}
Then,
\begin{align}
\E_{\p(x)\p(y\mid x)}
\left[
\log
\frac{
\p\left(y \mid f_\varphi(x)\right)
}{
\phat_{\psi}\left(y \mid x\right)
}
\right]
& =
\E_{\p(z)}
\left[
\E_{\p(x,y \mid z)}
\left[
g(y,z)
\right]
\right]
\\
& =
\E_{\p(z)}
\left[
\E_{\p(y \mid z)}
\left[
g(y,z)
\right]
\right]
\\
&=
\E_{\p(z)}
\left[
\KL\left(
\p(y \mid z)
\, \Vert \,
\phat_\psi(y \mid z)
\right)
\right].
\end{align}
Substituting back $z=f_\varphi(x)$ yields
\begin{align}
\E_{\p(x)\p(y\mid x)}
\left[
\log
\frac{
\p(y \mid f_\varphi(x))
}{
\phat_\theta(y \mid x)
}
\right]
& =
\E_{p(f_\varphi(x))}
\left[
\KL\left(
\p(y \mid f_\varphi(x))
\, \Vert \,
\phat_\psi(y \mid f_\varphi(x))
\right)
\right]\\
& =
\E_{\p(x)}
\left[
\KL\left(
\p(y \mid f_\varphi(x))
\, \Vert \,
\phat_\theta(y \mid x)
\right)
\right] = \Delta_{\Phat \mid f}.
\end{align}
\end{proof}

\subsection{Proof of Proposition \ref{prop:invariant_optimal}}

\begin{proof} \label{sup:proof_invariant_optimal}
Let $\Phat$ be a family of conditional distributions indexed by $\theta=(\psi,\varphi)$, where $\varphi$ parameterizes representations in $\F$ and $\psi$ denotes the remaining model parameters. 

Assume that the model is well-specified so that
\begin{equation}
    \inf_{\theta\in\Theta}\risk_{\Phat}(\theta)=0,
\end{equation}
and that this infimum is attained. Let $f_{\varphi^\star}$ be a robust representation and assume that $f_{\varphi^\star}$ is minimal in $\F_0$. Then, by robustness, there exists $\psi^\star$ such that for $\theta^\star=(\psi^\star,\varphi^\star)$,
\begin{equation}
    \theta^\star \in \argmin_{\theta\in\Theta}\bar{\risk}_{\Phat}(\theta).
\end{equation}

By Lemma \ref{lemma:same_min}, the minimizers of $\bar{\risk}_{\Phat}$ and $\risk_{\Phat}$ coincide. Therefore,
\begin{equation}
    \risk_{\Phat}(\theta^\star)
    =
    \E_{\p(x)}
    \left[
        \KL\left(
            \p(y \mid x)
            \Vert
            \phat_{\psi^\star}(y \mid f_{\varphi^\star}(x))
        \right)
    \right]
    =0.
\end{equation}

Adding and subtracting $\log \p(y \mid f_{\varphi^\star}(x))$ gives
\begin{align}
    \risk_{\Phat}(\theta^\star)
    &=
    \E_{\p(x)} \E_{\p(y \mid x)}
    \left[
        \log \frac{\p(y \mid x)}{\p(y \mid f_{\varphi^\star}(x))}
        +
        \log \frac{\p(y \mid f_{\varphi^\star}(x))}{\phat_{\psi^\star}(y \mid f_{\varphi^\star}(x))}
    \right] \\
    &=
    I(y;x \mid f_{\varphi^\star}(x))
    +
    \E_{p(x)}
    \left[
        \KL\left(
            p(y \mid f_{\varphi^\star}(x))
            \,\Vert\,
            \phat_{\psi^\star}(y \mid f_{\varphi^\star}(x))
        \right)
    \right]
    =0. \label{eq:last}
\end{align}
Since both terms are nonnegative it follows that
\begin{equation}
    I(y;x \mid f_{\varphi^\star}(x))=0.
\end{equation}

Now, under the model in Equation \eqref{eq:prob_model}, for any deterministic representation $f$,
\begin{align}
    & \p(y\mid x) = \E_{\p(c\mid x)}[\p(y \mid c)], \\
    & \p(y\mid f(x)) = \E_{\p(c\mid f(x))}[\p(y \mid c)].
\end{align}
Hence
\begin{equation}
    I(y;x \mid f(x))
    =
    \E_{\p(x)}
    \left[
        \KL\left(
            \E_{\p(c\mid x)}[\p(y \mid c)]
            \,\Vert\,
            \E_{\p(c\mid f(x))}[\p(y \mid c)]
        \right)
    \right].
\end{equation}
Applying this with $f=f_{\varphi^\star}$ yields
\begin{equation} \label{eq:1}
    \E_{\p(c\mid f_{\varphi^\star}(x))}[\p(y \mid c)]
    =
    \E_{\p(c\mid x)}[\p(y \mid c)]
    \qquad
    \text{almost surely in }\p(x).
\end{equation}

By assumption, there exists a robust invariant representation $f_{\varphi^\dagger}$. Since $f_{\varphi^\star}$ is minimal in $\F_0$ and $f_{\varphi^\dagger}\in\F_0$, there exists a function $h$ such that
\begin{equation}
    f_{\varphi^\star}(x)=h(f_{\varphi^\dagger}(x))
    \qquad
    \text{almost surely in }\p(x).
\end{equation}
By invariance of $f_{\varphi^\dagger}$,
\begin{equation}
    \p(y \mid f_{\varphi^\dagger}(x),e)=\p(y \mid f_{\varphi^\dagger}(x)).
\end{equation}
Since $f_{\varphi^\star}(x)$ is a function of $f_{\varphi^\dagger}(x)$, we have
\begin{align}
    \p(y \mid f_{\varphi^\star}(x),e)
    &=
    \E\left[\p(y \mid f_{\varphi^\dagger}(x),e)\mid f_{\varphi^\star}(x),e\right] \\
    &=
    \E\left[\p(y \mid f_{\varphi^\dagger}(x))\mid f_{\varphi^\star}(x),e\right].
\end{align}
By robustness of $f_{\varphi^\dagger}$ and Equation \eqref{eq:1},
\begin{equation}
    \p(y \mid f_{\varphi^\dagger}(x))
    =
    \p(y \mid x)
    =
    \p(y \mid f_{\varphi^\star}(x))
    \qquad
    \text{almost surely in }\p(x), 
\end{equation}
and therefore,
\begin{align}
    \p(y \mid f_{\varphi^\star}(x),e)
    &=
    \E\left[\p(y \mid f_{\varphi^\star}(x))\mid f_{\varphi^\star}(x),e\right] \\
    &=
    \p(y \mid f_{\varphi^\star}(x)).
\end{align}
\end{proof}

\section{Risk derivations for the tradeoff simulation} 
Let
\begin{align}
 & e_{j}\sim\mathcal{N}(0,\sigma_{e}^{2}),\\
 & c_{ij}\mid e_{j}\sim\mathcal{N}(e_{j},\sigma_{c}^{2}),\\
 & s_{ij}=\alpha e_{j}+u_{ij},\\
 & u_{ij}\sim\mathcal{N}(0,\sigma_{u}^{2}),\\
 & \varepsilon_{ij}\sim\mathcal{N}(0,\sigma_{y}^{2}).
\end{align}

\subsection{Well-specified model} \label{sec:spec_deriv}

We first consider the model 
\begin{equation}
y_{ij} = c_{ij} + \alpha e_j + \varepsilon_{ij},
\end{equation}
with $x_{ij}=(c_{ij},s_{ij})$. 
Below we derive 
\begin{equation}
\bar{\risk}^\star
=
\E_{\p(e)\p(x \mid e)}
\left[
\KL\left( \p(y \mid x,e) \,\Vert\, \p(y \mid x) \right)
\right].
\end{equation}
Suppressing index notation, since
\begin{equation}
y = c + \alpha e + \varepsilon,
\end{equation}
conditional on $(c,s,e)$ we have
\begin{equation}
\p(y \mid c,s,e)
=
\mathcal N(c+\alpha e,\sigma_{\epsilon}^2).
\end{equation}

Next, we have
\begin{equation}
\E[y \mid c,s]
=
c + \alpha \E[e \mid c,s],
\end{equation}
and
\begin{equation}
\Var(y \mid c,s)
=
\alpha^2 \Var(e \mid c,s) + \sigma_{\epsilon}^2,
\end{equation}
so
\begin{equation}
\p(y \mid c,s)
=
\mathcal N\left(
c+\alpha \E[e \mid c,s],
\sigma_{\epsilon}^2+\alpha^2 \Var(e \mid c,s)
\right).
\end{equation}

Since $(e,c,s)$ is jointly Gaussian, 
\begin{equation}
V_\alpha \coloneqq \Var(e \mid c,s)
=
\left(
\frac{1}{\sigma_e^2}
+
\frac{1}{\sigma_c^2}
+
\frac{\alpha^2}{\sigma_u^2}
\right)^{-1}.
\end{equation}

Therefore, we have
\begin{align} \notag
& \KL\left( \p(y \mid c,s,e) \,\Vert\, \p(y \mid c,s) \right) \\
  & =
\KL\left(
\mathcal N(c+\alpha e,\sigma_{\epsilon}^2)
\;\Vert\;
\mathcal N\left(c+\alpha \E[e \mid c,s],\sigma_{\epsilon}^2+\alpha^2 V_\alpha\right)
\right).
\end{align}

Using closed form KL for Gaussian distributions,
\begin{equation}
\KL\left( \mathcal N(\mu_1,v_1) \,\Vert\, \mathcal N(\mu_2,v_2) \right)
=
\frac{1}{2}
\left[
\log\frac{v_2}{v_1}
+
\frac{v_1+(\mu_1-\mu_2)^2}{v_2}
-1
\right],
\end{equation}
we obtain
\begin{equation}
\KL\left( \p(y \mid c,s,e) \,\Vert\, \p(y \mid c,s) \right)
=
\frac{1}{2}
\left[
\log\frac{\sigma_{\epsilon}^2+\alpha^2 V_\alpha}{\sigma_{\epsilon}^2}
+
\frac{
\sigma_{\epsilon}^2+\alpha^2 (e-\E[e \mid c,s])^2
}{
\sigma_{\epsilon}^2+\alpha^2 V_\alpha
}
-1
\right].
\end{equation}
Taking expectation with respect to $(e,c,s)$ yields
\begin{equation}
\bar{\risk}^\star
=
\frac{1}{2}
\left[
\log\frac{\sigma_{\epsilon}^2+\alpha^2 V_\alpha}{\sigma_{\epsilon}^2}
+
\frac{
\sigma_{\epsilon}^2+\alpha^2 \E\left[(e-\E[e \mid c,s])^2\right]
}{
\sigma_{\epsilon}^2+\alpha^2 V_\alpha
}
-1
\right].
\end{equation}

Finally,
\begin{equation}
\E\left[(e-\E[e \mid c,s])^2\right]
=
\E\left[\Var(e \mid c,s)\right]
=
V_\alpha,
\end{equation}
so the second term simplifies, and we get that
\begin{equation}
\bar{\risk}^\star
=
\frac{1}{2}
\log\left(
1+\frac{\alpha^2 V_\alpha}{\sigma_{\epsilon}^2}
\right).
\end{equation}

\subsection{Misspecified model} \label{sec:misspec_deriv}

We now consider the model
\begin{equation}
y_{ij}
=
c_{ij}
+
\alpha e_j
+
(1-\alpha)e_j c_{ij}
+
\varepsilon_{ij},
\end{equation}
with $x_{ij}=(c_{ij},s_{ij})$.
Suppressing index notation again, since
\begin{equation}
y
=
c
+
\bigl(\alpha + (1-\alpha)c\bigr)e
+
\varepsilon,
\end{equation}
conditional on $(c,s,e)$ we have
\begin{equation}
\p(y \mid c,s,e)
=
\mathcal N\bigl(c+\bigl(\alpha + (1-\alpha)c\bigr)e,\sigma_{\epsilon}^2\bigr).
\end{equation}

Next, $(e,c,s)$ is jointly Gaussian and therefore
\begin{equation}
e \mid c,s \sim \mathcal N(m(c,s),V_\alpha),
\end{equation}
where
\begin{equation}
m(c,s)
=
V_\alpha
\left(
\frac{c}{\sigma_c^2}
+
\frac{\alpha s}{\sigma_u^2}
\right),
\qquad
V_\alpha
=
\left(
\frac{1}{\sigma_e^2}
+
\frac{1}{\sigma_c^2}
+
\frac{\alpha^2}{\sigma_u^2}
\right)^{-1}.
\end{equation}

This yields
\begin{equation}
\E[y \mid c,s]
=
c+\bigl(\alpha + (1-\alpha)c\bigr)\E[e \mid c,s]
=
c+\bigl(\alpha + (1-\alpha)c\bigr)m(c,s),
\end{equation}
and
\begin{equation}
\Var(y \mid c,s)
=
\bigl(\alpha + (1-\alpha)c\bigr)^2 \Var(e \mid c,s)+\sigma_{\epsilon}^2
=
\bigl(\alpha + (1-\alpha)c\bigr)^2 V_\alpha+\sigma_{\epsilon}^2,
\end{equation}
so
\begin{equation}
\p(y \mid c,s)
=
\mathcal N\left(
c+\bigl(\alpha + (1-\alpha)c\bigr)m(c,s),
\sigma_{\epsilon}^2+\bigl(\alpha + (1-\alpha)c\bigr)^2 V_\alpha
\right).
\end{equation}

Using the closed-form expression for the KL divergence between Gaussian distributions again, we get
\begin{align} \notag
& \KL\left( \p(y \mid c,s,e) \,\Vert\, \p(y \mid c,s) \right) \\
&=
\frac{1}{2}
\left[
\log\frac{\sigma_{\epsilon}^2+\bigl(\alpha + (1-\alpha)c\bigr)^2 V_\alpha}{\sigma_{\epsilon}^2}
+
\frac{
\sigma_{\epsilon}^2+\bigl(\alpha + (1-\alpha)c\bigr)^2 (e-\E[e \mid c,s])^2
}{
\sigma_{\epsilon}^2+\bigl(\alpha + (1-\alpha)c\bigr)^2 V_\alpha
}
-1
\right].
\end{align}
Taking expectation with respect to $(e,c,s)$ yields
\begin{align} \notag
\bar{\risk}^\star
&=
\frac{1}{2}
\E_{c,s}
\left[
\log\frac{\sigma_{\epsilon}^2+\bigl(\alpha + (1-\alpha)c\bigr)^2 V_\alpha}{\sigma_{\epsilon}^2}
\right] \\
&\quad+
\frac{1}{2}
\E_{e,c,s}
\left[
\frac{
\sigma_{\epsilon}^2+\bigl(\alpha + (1-\alpha)c\bigr)^2 (e-\E[e \mid c,s])^2
}{
\sigma_{\epsilon}^2+\bigl(\alpha + (1-\alpha)c\bigr)^2 V_\alpha
}
-1
\right].
\end{align}

Now, conditional on $(c,s)$,
\begin{equation}
\E\left[(e-\E[e \mid c,s])^2 \mid c,s\right]
=
\Var(e \mid c,s)
=
V_\alpha.
\end{equation}
Therefore,
\begin{align} \notag
& \E\left[
\frac{
\sigma_{\epsilon}^2+\bigl(\alpha + (1-\alpha)c\bigr)^2 (e-\E[e \mid c,s])^2
}{
\sigma_{\epsilon}^2+\bigl(\alpha + (1-\alpha)c\bigr)^2 V_\alpha
}
\Bigm\vert c,s
\right] \\
&=
\frac{
\sigma_{\epsilon}^2+\bigl(\alpha + (1-\alpha)c\bigr)^2 V_\alpha
}{
\sigma_{\epsilon}^2+\bigl(\alpha + (1-\alpha)c\bigr)^2 V_\alpha
}
=
1,
\end{align}
so the second expectation vanishes and we have
\begin{equation}
\bar{\risk}^\star
=
\frac{1}{2}
\E_{\p(c,s)}
\left[
\log\left(
1+\frac{\bigl(\alpha + (1-\alpha)c\bigr)^2 V_\alpha}{\sigma_{\epsilon}^2}
\right)
\right].
\end{equation}
Since the integrand depends only on $c$, we get that
\begin{equation}
\bar{\risk}^\star
=
\frac{1}{2}
\E_{\p(c)}
\left[
\log\left(
1+\frac{\bigl(\alpha + (1-\alpha)c\bigr)^2 V_\alpha}{\sigma_{\epsilon}^2}
\right)
\right].
\end{equation}

\section{Experimental details} \label{sup:details}

All experiments were implemented in Python using publicly available datasets. Code will be released upon acceptance.

\subsection{Tradeoff simulation} \label{sup:tradeoff_sim}

For each configuration, we generate $50$ training environments and $50$ test environments, with $1000$ observations per environment, so
each training set contains $50{,}000$ observations and each test set contains $50{,}000$ observations.
All results reported in Figure~\ref{fig:sim} are based on evaluation on the held-out environments and 10 repetitions of the experiment.

For IRM, and VaREx, the predictor is a multilayer perceptron with two hidden layers of width $32$ and ReLU activations, trained with Adam at learning rate $10^{-3}$.
Grid search over $[0.001, 0.01, 0.1, 1.0, 5.0, 10.0, \dots, 90, 100]$ yielded the following penalty coefficients:
\begin{equation}
\lambda_{\mathrm{IRM}} = \lambda_{\mathrm{CLoVE}} = \lambda_{\mathrm{VaREx}} =   0.01.
\end{equation}

For NGMM, we use the same fixed-effects network architecture, together with a scalar random intercept.
The model is fit with the ODE-based marginal-likelihood approximation described in Section~\ref{sec:ri_estimation}.
All methods were trained for 20 epochs with batch size $100$ using Adam with learning rate $10^{-2}$.

\subsection{Colored MNIST} \label{sup:cmnist_details}

In the \emph{causal} regime, we set the baseline values
\begin{equation}
a = 0.75,
\qquad
b = 0.25.
\end{equation}
The resulting values of $(\alpha, \beta, \gamma, \delta)$ for training environments with 
\begin{equation}
\varepsilon \in \{0.20,0.15,0.10\},
\end{equation}
that is,
\begin{equation}
(0.95,0.55,0.45,0.05),
\qquad
(0.90,0.60,0.40,0.10),
\qquad
(0.85,0.65,0.35,0.15),
\end{equation}
and for test environments 
\begin{align*}
& (0.90,0.60,0.40,0.10),
\qquad
(0.55,0.95,0.05,0.45),
\qquad
(0.60,0.90,0.10,0.40), \\
&
\qquad \qquad \qquad (0.65,0.85,0.15,0.35),
\qquad
(0.75,0.75,0.25,0.25).
\end{align*}

All methods share the same architecture:
a multilayer perceptron operating on the flattened $3\times 28 \times 28$ image, with two hidden layers of width $32$ and ReLU activations.
For NGMM, we used the same architecture for the fixed component combined with a learned random-intercept term.

The NGMM model is trained with the Bernoulli ODE-based objective with truncation interval $[-10,10]$ and $32$ ODE steps.

All methods were trained with Adam at learning rate $10^{-3}$ for $1$ epoch and batch size $64$.
Grid search over $[0.001, 0.01, 0.1, 1, 5, 10, \dots, 90, 100]$ yielded the following penalty coefficients:
\begin{equation}
\lambda_{\mathrm{IRM}} = 20,
\qquad
\lambda_{\mathrm{CLoVE}} = 30,
\qquad
\lambda_{\mathrm{VaREx}} = 100.
\end{equation}

\subsection{OGB-MolPCBA} \label{sup:ogb_details}

For all methods we use a small graph neural network with atom and bond encoders, followed by two GINE message-passing layers with hidden width 32, batch normalization, ReLU activations, and global add pooling, and a final linear layer. 

We fit NGMM according to the Bernoulli random-intercept model using the ODE-based objective. 

All methods are trained with Adam, learning rate 0.001, batch size 64, and 10 epochs. 
Grid search for penalty coefficients yielded  
\begin{equation}
\lambda_{\mathrm{IRM}} = \lambda_{\mathrm{CLoVE}} = \lambda_{\mathrm{VaREx}} =  0.1.
\end{equation}
For NGMM we set the truncation interval to $[-3,3]$ and use 64 ODE steps.

\subsection{Camelyon17} \label{sup:camelyon_details}

We resized all images to $96 \times 96$ pixels, centered and normalized. 
For all methods we use the same convolutional network: two convolutional layers with 32 and 64 channels, stride 2, ReLU activations, and adaptive average pooling, followed by a linear classifier. All methods were trained with Adam with learning rate 0.001 for 10 epochs. Grid search for penalty coefficients yielded 
\begin{equation}
\lambda_{\mathrm{IRM}} = 0.85,\qquad
\lambda_{\mathrm{CLoVE}} = 5,\qquad
\lambda_{\mathrm{VaREx}} = 125.
\end{equation}
We fit NGMM with truncation interval of [-3,3] and 64 ODE steps. 

\end{document}